\documentclass[11pt,DIV=12,BCOR=0.7cm]{article}   
\pdfoutput=1

\usepackage[left=2cm, right=2cm, top=2cm, bottom=3cm]{geometry}
\usepackage{graphicx,amssymb}
\usepackage{amsmath,amsfonts, amsthm}
\usepackage{bm}
\usepackage{mathtools,mathdots}
\usepackage{subcaption}
\usepackage{algorithm,algorithmic}
\usepackage{setspace}
\usepackage{xcolor}
\usepackage{scalerel,stackengine}
\usepackage{mathtools}
\usepackage{tikz}
\usetikzlibrary{matrix,chains,positioning,decorations.pathreplacing,arrows}
\usetikzlibrary{positioning,calc}
\usepackage{url}
\usepackage{wrapfig, multirow}
\usepackage{tablefootnote}
\usepackage{textcomp}
\usepackage[hidelinks]{hyperref}

\newtheorem{theorem}{Theorem}[section]

\newcommand*\diff{\mathop{}\!\mathrm{d}}
\DeclarePairedDelimiterX{\infdivx}[2]{\big(}{\big)}{%
	#1\;\delimsize\|\;#2%
}
\newcommand{\infdiv}{\text{KL}\infdivx}
\DeclarePairedDelimiter{\norm}\lVert\rVert

\DeclareMathOperator{\E}{\mathbb{E}}
\newcommand\equalhat{\mathrel{\stackon[1.5pt]{=}{\stretchto{%
				\scalerel*[\widthof{=}]{\wedge}{\rule{1ex}{3ex}}}{0.5ex}}}}
			
\newcommand{\expnumber}[2]{{#1}\mathrm{e}{#2}}


\usepackage[normalem]{ulem}

\newenvironment{keywords}
{\bgroup\leftskip 20pt\rightskip 20pt \small\noindent{\bf Keywords:} }%
{\par\egroup\vskip 0.25ex}

\makeatletter
\DeclareRobustCommand{\Udots}{%
	\vcenter{\offinterlineskip
		\halign{%
			\hbox to .8em{##}\cr
			\hfil.\cr\noalign{\kern.2ex}
			\hfil.\hfil\cr\noalign{\kern.2ex}
			.\hfil\cr}%
	}%
}
\makeatother

\title{Concurrent Training and Layer Pruning of Deep Neural Networks}
\author{Valentin Frank Ingmar Guenter\thanks{Graduate Student, Email:
vguenter@uci.edu}\ \ and\ \ Athanasios~Sideris\thanks{Professor, Email:
asideris@uci.edu}\\ \\  Department of Mechanical and
Aerospace Engineering,\\ University of California, Irvine,\\
Irvine, CA, 92697}
\begin{document}	
\maketitle
\begin{abstract}
We propose an algorithm capable of identifying and eliminating irrelevant
layers of a neural network during the early stages of training. In contrast to weight or filter-level pruning, layer pruning reduces the harder to parallelize sequential computation of a neural network. We employ a structure using linear residual connections around nonlinear network sections that allow the flow of information through the network once a nonlinear section is pruned. Our approach is based on variational inference principles using Gaussian scale mixture priors on the neural network weights and allows for substantial cost savings during both training and inference. More specifically, the variational posterior distribution of scalar Bernoulli random variables multiplying a whole layer weight matrix of its nonlinear sections is learned, similarly to adaptive layer-wise dropout. To overcome challenges of concurrent learning and pruning such as premature pruning and lack of robustness with respect to weight initialization or the size of the starting network, we adopt the ``flattening'' hyper-prior on the prior parameters.
We prove that, as a result of the usage of this hyper-prior, the solutions of the resulting optimization problem always describe deterministic networks with parameters of the posterior distribution at either 0 or 1.
We formulate a projected stochastic gradient descent algorithm and prove its convergence to such a solution using well-known stochastic approximation results.  In particular, we establish and prove analytically conditions that lead to a layer's weights converging to zero and derive  practical pruning conditions from the theoretical results.
The proposed algorithm is evaluated on the MNIST, CIFAR-10 and ImageNet datasets and the commonly used fully-connected, convolutional and residual architectures LeNet, VGG16 and ResNet. The simulations demonstrate that our method achieves state-of the-art performance for layer pruning at reduced computational cost in distinction to competing methods due to the concurrent training and pruning.
\end{abstract}

\begin{keywords}
	Neural networks, Variational inference, Bayesian model reduction, Neural network pruning, Layer pruning, Stochastic optimization
\end{keywords}

\section{Introduction}
Deep learning has gained tremendous prominence during recent years as it has been shown to achieve outstanding performance in a variety of machine learning tasks, such as natural language processing, object detection, semantic image segmentation and reinforcement learning \cite{girshick_fast_2015, noh_learning_2015, silver_mastering_2017}.
With the emergence of residual architectures such as, for example, ResNets \cite{ResNet_he_2016,he_identitymappings2016} and Batch Normalization \cite{batchnorm_ioffe2015}, Deep Neural Networks (DNNs) can be trained even when the number of layers are well into the hundreds.
The modern trend of making these architectures deeper can lead to popular architectures being unnecessarily over-parameterized for a task at hand, resulting in excessive computational requirements both during training and inference. As a result it is often difficult to train such architectures and/or infeasible to deploy them on systems with limited computational resources, e.g., low-powered mobile devices.

Neural network pruning \cite{han_learning_2015,blalock_what_2020} has been one technique used to reduce the size of over-parameterized Neural Networks (NNs) and focuses mainly on eliminating weights and/or nodes from the network while maintaining its prediction accuracy.
While unstructured weight pruning \cite{han_learning_2015} prunes individual weights from a network's layer, structured node pruning (e.g. \cite{li_pruning_2017, he_soft_2018, liebenwein_lost_2021, SCOP_tang_2020}) aims to remove entire neurons or filters from the network. In effect, the network is pruned to a narrower counterpart with the same depth. This allows accelerated inference with standard deep
learning libraries; in contrast, the practical acceleration of DNNs achieved with unstructured or weight
pruning may be limited by poor cache locality and jumping memory access caused by the ensuing random
connectivity of the network and require specialized hardware to perform best.
However, with the ability to heavily parallelize the computations within a layer of the network on graphics processing units (GPUs), the sequential workload of computing layer after layer contributes most considerably to the workload and time needed for training and inference in modern DNN architectures. This makes pruning complete layers from the network especially attractive since the non-parallelizable workload of training/evaluating the network is greatly reduced and given the same reduction in parameters of floating point operations (FLOPS), the network efficiency is potentially higher when compared to weight or node pruning.

The existing literature on layer pruning is limited, due to the modern trend to make DNNs deeper to achieve better accuracy results on more complicated tasks. However, these popular and well performing deeper architectures such as ResNets \cite{ResNet_he_2016, he_identitymappings2016} are often adopted for other tasks and are trained on a variety of datasets. Hence, the ability to effectively prune them, and especially in a layer-wise fashion, holds considerable potential. In the case of unstructured and structured node pruning, training larger networks and pruning them to smaller size often leads to superior performance compared to just training the smaller models from a fresh initialization \cite{blalock_what_2020, frankle_lottery_2019, Louizos_2017}. We find that the same holds true for layer pruning in our experiments.

\paragraph{Review of existing layer pruning studies and methods:}
In \cite{resnets_ensembles_veit16} residual networks are studied and it is first observed that with standard network training the resulting network often consists of a few most important layers and many layers of less importance. As a result when deleting such a single layer randomly from the network and propagating only over the skip-connection of the residual structure the error is not increased much. However, even if most layers can be removed in isolation from the network without affecting the error in a material way, removing many of them at the same time leads to a larger increase in error. The authors of \cite{resnets_ensembles_veit16} do not report pruning results with respect to test accuracy and pruning ratios.
To address the diminishing forward flow, the vanishing gradients and the high computational effort in very deep convolutional networks, \cite{stoch_depth_huang16} proposes \textit{stochastic depth} as a training procedure for residual networks using Bernoulli random variables (RVs) multiplying structures of the network; similar to how Dropout \cite{Srivastava_2014_Dropout} acts on the features of a layer, the RVs of stochastic depth act on the whole convolutional layer path of a residual block. Thus, training such a network can be thought of training an ensemble of networks which variable depth. During training and for each mini-batch a subset of layers is dropped and in effect a lot shorter networks are trained. At test time the full, deeper network is used. The authors demonstrate how stochastic depth is able to successfully train networks beyond 1200 layers while still improving performance and reducing training times at the same time. In their work, all distributions of RVs are chosen beforehand and kept constant during training, i.e. the dropout rates are constant.

Many existing pruning methods are applicable only on pre-trained networks and, therefore, require the costly initial training of the often large models. In this group, \cite{chen19_shallowing} identifies and removes layers from feed-forward and residual DNNs by analyzing the feature representations computed at different layers. A set of linear classifiers on features extracted at different intermediate layers are trained to generate predictions from these features and identify the layers that provide minor contributions to the network performance; the loss in performance when removing a layer is compensated via retraining and a knowledge distillation technique.
In \cite{DBPwang2019}, \textit{discrimination based block-level pruning} (DBP) is introduced to remove redundant layer blocks (a sequence of consecutive layers) of a network according to the discrimination of their output features. To judge the discriminative ability of each layer block in an already trained network, again additional fully-connected layers are introduced after each block as linear classifiers. The added layers are trained and their accuracy is used as a measure of how discriminative the output features of each block in the network are. Blocks with the least discriminative output features are pruned from the network. This method requires additional training of the network's weights after pruning. In \cite{LPSR_zhang},
\textit{Layer Pruning for Obtaining Shallower ResNets} (LPSR) aims to prune the least important residual blocks from the network to achieve a desired pruning ratio. Each such block is followed by a Batch Normalization (BN) layer \cite{batchnorm_ioffe2015} and can be effectively removed by setting every factor of the BN layer to zero. The impact to the loss function is approximated using first order gradients according to \cite{molchanov_pruning_conv} and based on these gradients, an importance score is assigned for each residual block and blocks with the lowest score are pruned. We note that this method considers disconnecting one convolutional block at a time only and it is unclear how removing two or more blocks at the same time would affect the loss. A follow-up fine-tuning procedure is needed to obtain best results.

The Lottery Ticket Hypothesis \cite{frankle_lottery_2019} states that there exist winning tickets given by subnetworks of a larger NN that can achieve on par performance after training. However, identifying these winning tickets is not trivial and hence most existing pruning methods operate on pre-trained large networks. This may not be a viable choice if the computational resources required to train very deep networks are not available and transfer learning is not applicable. Concurrent training/pruning of deep neural networks aims to reduce the computational and memory load during training as well as prediction.
Carrying out the pruning process simultaneously with learning the network's weights imposes a major challenge. It is desirable that pruning is effected as early as possible during training to maximize the computational efficiency, but at the same time the process
needs to be robust with respect to the initialization of the weights and the starting architecture. This means that even if a weight or structure of the network is initialized unfavorably, it is given a chance to adapt and learn useful behavior instead of getting pruned prematurely due to its poor initialization.

Existing methods that in principle carry out the pruning simultaneously to the network training for layer pruning are summarized in the following.
In \cite{sparse_huang2018}, scaling factors are introduced for the outputs of specific structures, such as neurons, group or residual blocks of the network. By adding sparsity regularization in form of the $\mathcal{L}_1$-norm on these factors and in the spirit of the Accelerated Proximal Gradient (APG) method of \cite{proximal_algs_Parikh2014}, some of them are forced to zero during training and the corresponding structure is removed from the network leading to a training and pruning scheme procedure.
However, \cite{sparse_huang2018} does not consider any computational savings during training and does not reported at what stage during training most of the pruning process is completed.
In \cite{xu2020layer}, each convolutional layer is replaced  by their \textit{ResConv} structure, which adds a bypass path around the convolutional layer, and also scaling factors to both paths. The scaling factors are learned during training and $\mathcal{L}_1$-norm regularization is placed on them to force a sparse structure. Convolutional layers with a small layer scaling factor are only removed after training, and hence not saving computational load during training in \cite{xu2020layer}. The authors note that retraining the network is needed when the pruning rate is high.

Bayesian variational techniques together with sparsity promoting
priors have been also employed for neural network pruning \cite{Louizos_2017,molchanov2017variational, Nalisnick_2015}. Such methods typically employ Gaussian scale mixture priors, which are zero mean normal probability density functions (PDF’s) with variance (scale) given by another RV. A notable prior in this class is the spike-and-slab prior, in which only two scales are used.
Dropout \cite{Srivastava_2014_Dropout}, with its Bernoulli distribution, has been interpreted as imposing such a spike-and-slab PDF on the weights of a NN \cite{gal2015dropout, Louizos_2017}.
In \cite{Nalisnick_2015}, for the purpose of weight pruning, an identifiable parametrization of the multiplicative noise is used where the RVs are the product of NN weights and the scale variables themselves. Then, estimates of the scale variables are obtained using the Expectation-Maximization algorithm to maximize a lower bound on the log-likelihood. The expectation step is accomplished using samples from the posterior of the NN weights via Monte Carlo (MC) simulations. Weights with a sum of posterior variance and mean less than a set threshold are pruned.
In \cite{molchanov2017variational}, multiplicative Gaussian noise
is also used on the network weights, which receive an improper logscale uniform distribution as prior. It postulates a normal posterior on the weights and proceeds to maximize the Evidence Lower Bound (ELBO) over its parameters. Due to the choice of prior, an approximation to the Kullback–Leibler divergence term of the ELBO is necessary. The authors discuss that their method can be sensitive to weight initialization and extra steps must be taken to assure good initialization.

In \cite{pmlr-v97-nalisnick19a}, based on variational inference, \textit{automatic depth determination} for residual neural networks is introduced. Falling into the same category of variational methods, it aims to select layers and hence the appropriate depth of fully connected NNs.
Different marginal priors on the networks weights are considered, one of which being the spike-and-slab prior with two scales. In this case, multiplicative Bernoulli noise is used in each residual block, revealing equivalences with \textit{stochastic depth} \cite{stoch_depth_huang16}, but with adaptive dropout rate.
This way shrinkage of the corresponding weights to zero and therefore pruning of the layer is effected if a scale parameter converges to zero.
While \cite{pmlr-v97-nalisnick19a} reports the loss values for their method on different regression datasets, no explicit pruning results are stated and it is unclear by how much the networks are pruned. Computational load during training or prediction is not considered.

\paragraph{Overview of the Proposed Algorithm}
In this work, we propose a concurrent training and pruning algorithm capable of identifying redundant or irrelevant layers in a neural network during the early stages of the training process. Thus, our algorithm is capable of reducing the computational complexity of subsequent training iterations in addition to the complexity during inference. We use multiplicative Bernoulli noise, i.e., we propose a spike-and-slab method and use variational techniques; therefore, our method falls in the same category as \cite{Nalisnick_2015, pmlr-v97-nalisnick19a}.
This can be interpreted as adding layer-wise dropout \cite{stoch_depth_huang16} to the network, however, each layer receives its own adaptive dropout rate. During training via backpropagation, different subnetworks formed from the active layers are realized. The probabilities determining the active layers are learned using variational Bayesian principles.
To address the challenges of simultaneous training and pruning, we follow the ideas of \cite{GUENTER2024_robust} and adopt the ``flattening'' hyper-prior over the parameters of the Bernoulli prior on the scale RVs.
In addition, we carefully analyze the dynamical system describing the learning algorithm to gain insight and develop pruning conditions addressing the challenge of premature pruning.
There are several key contributions in this work, distinguishing it from previous literature:
\begin{enumerate}	
	\item Since we use multiplicative Bernoulli noise with adaptive parameters, it is important to ensure they are learned between $0$ and $1$.
	In this work and in contrast to \cite{GUENTER2024_robust}, we make use of projected gradient descent to learn these parameters. This leads to a cleaner overall problem formulation  in Section~\ref{sec:Problem_Formulation} and circumvents some of the technical difficulties in the problem formulation of \cite{GUENTER2024_robust}.
	
	\item We analyze the resulting constrained optimization problem over the network weights and variational Bernoulli parameters in Section~\ref{sec:Main-Results} carefully and show that its solutions are always deterministic networks, i.e., the Bernoulli parameters are either $0$ or $1$ in each layer of the network. This property is a further consequence of adopting the ``flattening'' hyper-prior and strengthens its importance for obtaining well regularized and efficiently pruned NNs. In addition, our analysis gives new insight into how the choice of the hyper-prior parameter measures the usefulness of a surviving layer. We note that these results can be extended to the problem formulation in \cite{GUENTER2024_robust} using the flattening hyper-prior for unit pruning and confirm the empirical observation made there of consistently arriving at deterministic pruned networks.
	
	\item We present a stochastic gradient descent algorithm for finding the optimal solution in Section~\ref{sec:algorithm} and analyze its convergence properties in Section~\ref{sec:convergence_results}. Stochastic approximation results are used to prove convergence of our algorithm to an optimal solution using dynamical systems stability results; we establish a region of attraction around $0$ for the dynamics of the parameters of the posterior PDFs on the scale RVs and the NN weights and provide provable conditions under which layers that converge to their elimination cannot recover and can be safely removed during training. This theoretical result leads to practical pruning conditions implemented in our algorithm and addresses the challenges of robust and concurrently training and pruning.

	\item Our method does not require much more computation per training iteration than standard backpropagation and achieves state-of-the-art layer pruning results. In fact, because of the discrete variational posterior, only some layers of the network are active during training, leading to additional computation savings \cite{graham2015efficient}. The simultaneous removal of layers of the network during training considerably reduces training times and/or expended energy. In our experiments, this reduction is of the order of $2$ to $3$-fold when training/pruning the ResNet110 architecture \cite{ResNet_he_2016, he_identitymappings2016} on the CIFAR-100 \cite{cifar10} dataset while only a minor loss in prediction accuracy occurs.
\end{enumerate}

The remainder of this paper is arranged as follows. In Section~\ref{sec:Problem_Formulation}, we present the statistical modeling that forms the basis of our layer pruning approach and formulate the resulting optimization problem. Section~\ref{sec:Main-Results} provides the analysis of the optimization problem and characterization of its solutions. In Section~\ref{sec:algorithm}, we summarize  the proposed concurrent learning/pruning algorithm. Convergence results supporting the pruning process are provided in Section~\ref{sec:convergence_results} and~\ref{app:assumption_verification}. In Section~\ref{sec:simulations}, we present simulations on standard machine learning problems and comparison with state-of-the-art layer pruning approaches. Finally, Section~\ref{sec:conclusions} concludes the paper.
\paragraph{Notation:}
We use the superscript $l$ to distinguish parameters or variables of the $l$th layer of a neural network  and subscript $i$ to denote dependence on the $i$th sample in the given dataset. We use $n$ in indexing such as $x(n)$ to denote iteration count. Also $\norm{\cdot}$ denotes the Euclidean norm of a vector, $M^T$ transpose of a matrix (or vector), and $\E[\cdot]$ taking expectation with respect to the indicated random variables. Other notations are introduced before their use.

\section{Problem Formulation}\label{sec:Problem_Formulation}
\subsection{Network Architectures for Layer Pruning}\label{sec:net_arch}
Layer pruning in feedforward NNs requires the use of a suitable network architecture. More specifically, once a layer (or a block of consecutive layers) is removed, the flow of information is interrupted and the network collapses unless a bypass path around the pruned layer (or block) is present. Such a bypass path should then be possible to absorb either in the preceding or following layer and thus completely eliminate its computational load; or at least it should have relatively low computational/memory requirements, if it has to remain in the place of the pruned layer. Residual NNs, such as ResNets \cite{ResNet_he_2016, he_identitymappings2016}, include either identity skip-connections or low-cost convolutional skip-connections that form shortcuts over multiple of the more expensive layer operations; these connections can serve the role of the required bypass paths. In purely sequential networks, such as for example VGG16 \cite{simonyan_deep_2015}, skip connections need to be introduced around the sections of the network that are candidates for pruning.

A section of a general NN structure that allows a unified approach for layer pruning for most of the different types of networks proposed in the literature is depicted in Fig.~\ref{fig:NN_struct_general}.
In this section, the part of the network that is a candidate for pruning is denoted by $Block^l(W,a)$ and may represent a single or multiple consecutive layers of weights $W$ and activation functions $a$. The indicated skip connection may be already a part of the NN as in the ResNet structures or supplied as a necessary part of the pruning process. A central part of our approach is the use of discrete Bernoulli $0/1$ random variables (RVs) $\xi^l\sim Bernoulli(\pi^l)$ with parameter $\pi$ as shown in Fig.~\ref{fig:NN_struct_general}. These RVs are sampled during training in a dropout fashion and their parameters $\pi^l$ are adjusted as explained in the following sections so that they converge either to $0$ or $1$ signifying the pruning or retaining of the $Block^l(W,a)$ section, respectively. Finally, we include functions $h^l(\cdot)$ to provide the generality needed to encompass the various NN architectures available in the literature in our framework; for example, $h^l(\cdot)$ can be the identity or an activation function or a pooling/padding operation.

Table \ref{tab:architectures} shows how existing network structures such as ResNets \cite{ResNet_he_2016, he_identitymappings2016} and VGG networks \cite{simonyan_deep_2015} can be cast in the structure of Fig.~\ref{fig:NN_struct_general}. $W$ denotes weights in fully-connected or filters in convolutional layers and ``$a$'' an activation function. $B$ denotes a batch normalization layer in the architecture and $I$ is the identity, i.e. no activation function is used. ResNet \cite{ResNet_he_2016} uses residual layer blocks of the structure $W$-$B$-$a$-$W$-$B$, where each weight $W$ represents one convolution in the network followed by a ReLU (Rectified Linear Unit) activation function. The full pre-activation structure of ResNet, named ResNet-v2, uses blocks of the form $B$-$a$-$W$-$B$-$a$-$W$ and no activation function after the skip connection. In both cases, the operation in the skip connection is either identity, a padding or pooling operation or a 1x1 convolution to match the dimensionality of the block output $\bar z$. To frame the VGG architecture in our general structure, we use blocks of the form $W$-($B$), where the batch normalization is optional, we take the functions $h(\cdot)$ to be ReLUs and disconnect the skip connection, signifying that there is no skip connection in the original VGG network; thus, to implement our pruning process on the VGG network, the network is modified by considering a skip connection. Finally, in all three of the above structures the RVs $\xi$ are not present and are added as indicated previously as a part of our pruning method.

\begin{figure}[t!]
    \centering
    \includegraphics[width=0.7\textwidth]{./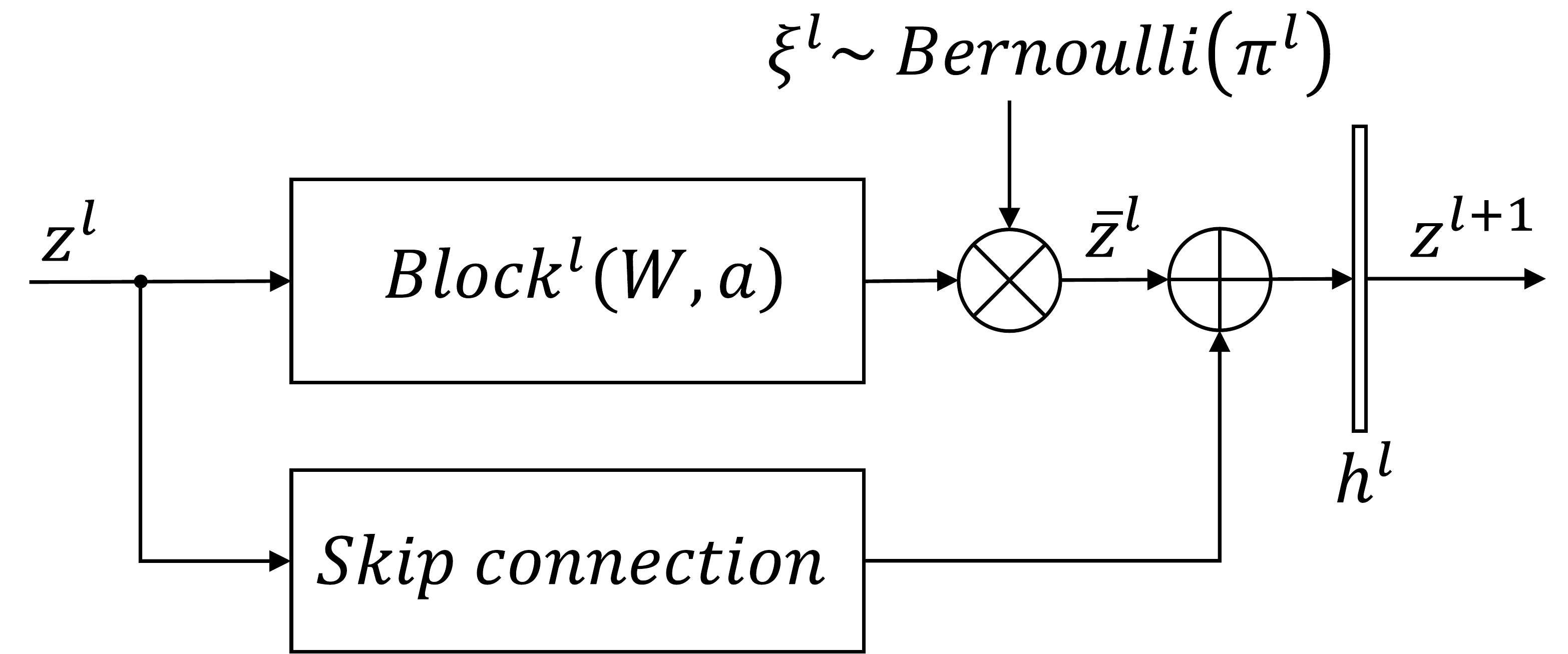}		
	\caption{A section of the general NN structure considered in this work. $z^l$ and $z^{l+1}$ are feature vectors in the NN. $Block^l(W,a)$ represents a single or multiple consecutive layers of weights $W$ and activation functions ``$a$''. $\xi^l$ are Bernoulli Random Variables with parameter $\pi^l$ and $h^l$ is an activation function.}\label{fig:NN_struct_general}
\end{figure}

\begin{table}
	\centering
	\caption{Different Network Architectures obtained from the general network structure in Fig.~\ref{fig:NN_struct_general}. All super- and subscripts are omitted. $W$ denotes a fully connected or convolutional layer, $B$ batch normalization layers and $a$ the activation functions.}
	\label{tab:architectures}
	\begin{tabular}{lllll}
		&  & Nonlinear path ($Block$)  & $h$ & Skip connection  \\
		\hline
		&ResNet  & $W-B-a-W-B$ & $a$ & Identity, padding/pooling, 1x1 convolution  \\
		&ResNet-v2  & $B-a-W-B-a-W$  & $I$ & Identity, padding/pooling, 1x1 convolution  \\
		&VGG  & $W-(B)$ & $a$  & $0$\\
		&Ours & $W-(B)-a-W$ & $I$ & Identity, padding/pooling, 1x1 convolution	
	\end{tabular}
\end{table}

In addition to the existing architectures,  we add the architecture labeled ``Ours'' in Table~\ref{tab:architectures} used to develop our results and depicted separately in Fig.~\ref{fig:NN_struct_specific}. However, these results are readily extended to any architecture in which ``Block'' ends with a $W$ component, as all of the shown architectures in Table~\ref{tab:architectures} do. A consideration in selecting our architecture, besides its simplicity, is that its block is a one-hidden layer network that by the universal approximation theorem \cite{HORNIK1989} can be made arbitrarily expressive by increasing the width of its hidden layer. In this manner, its function can be formed independently from adjacent layers and the block can be more easily removed if not essential to the overall function of the network. Furthermore, once a block in our architecture is pruned, the bypass connection can be easily absorbed within the remaining sections as explained in detail later in eq. \eqref{eq:lin_layer_reduction}.

We focus on the proposed residual structure (labeled ``Ours'' in Table~\ref{tab:architectures}) with fully connected layers for the remainder of the theoretical sections in this work. This structure yields a network with $L-1$ residual blocks realizing mappings $\hat y = NN(x;W,\Xi)$ as follows:
\begin{align}\label{eq:struct_res}
\begin{split}
z^1 &=x;\\
\bar z^l &=\xi^l\cdot \left(W^l_2 a^l\left(W^l_1 z^l+b_1^l\right) + b_2^l\right);\quad
z^{l+1} =  \bar z^l + W_3^l z^l \quad \quad\text{for}\quad l=1 \dots L-1,\\
\bar z^{L} &= W^Lz^L+b^L;\quad z^{L+1}=a^L(\bar z^L)\\
\hat y &= z^{L+1}.
\end{split}
\end{align}

\begin{figure}[t!]
	\centering
	\includegraphics[width=0.7\textwidth]{./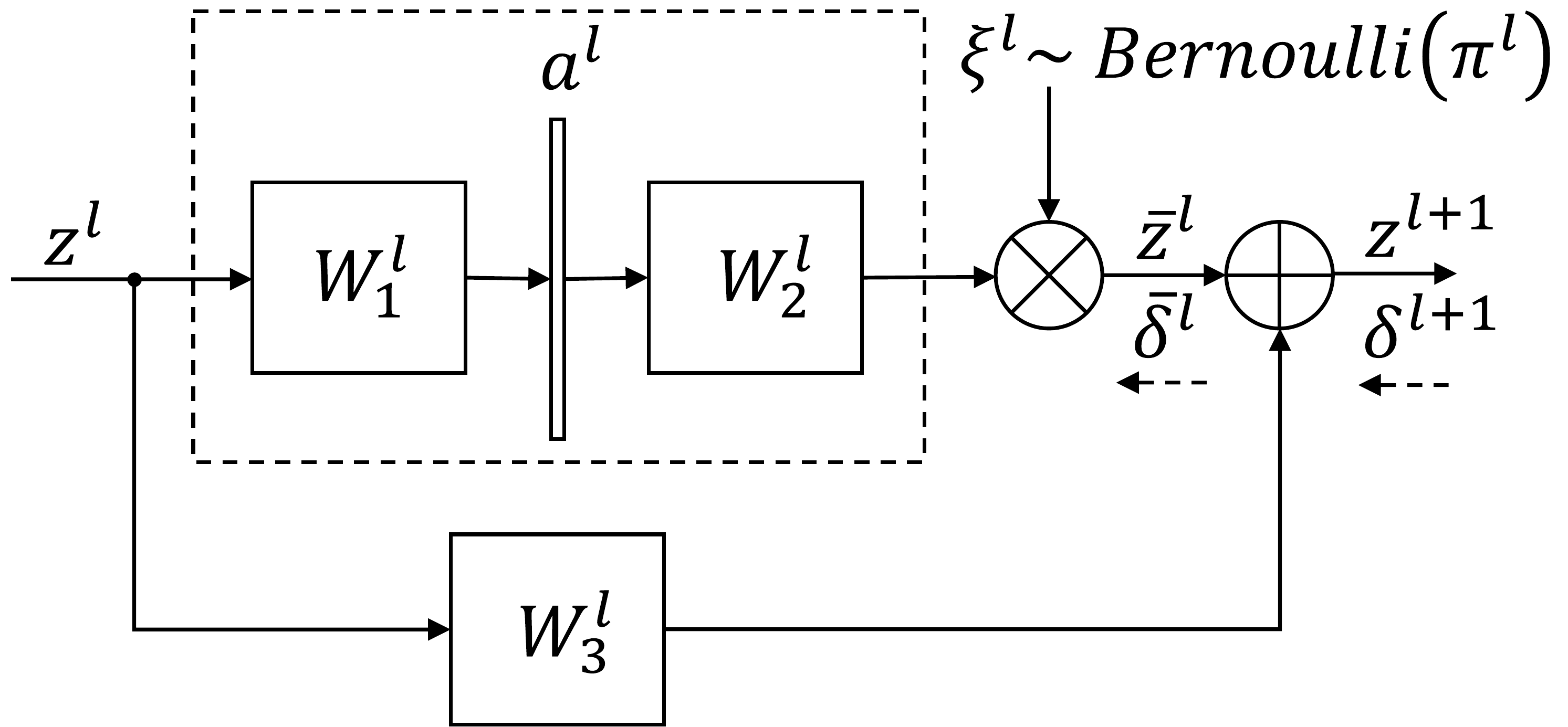}		
	\caption{The specific network structure considered in this work (Ours res. in Table \ref{tab:architectures}). $z^l$ are input features, $a^l$ the activation function and $W_1^l, W_2^l$ and $W_3^l$ network weights. The Bernoulli RV $\xi^l$ with parameter $\pi^l$ multiplies the output of the block $W_1^l-a^l-W_2^l$ and yields $\bar z^l$; $\bar \delta^l$  and $\delta^{l+1}$ are back-propagated gradients of the loss function with respect to $\bar z^l$ and $z^l$, respectively.}\label{fig:NN_struct_specific}
\end{figure}

In this model,  $\xi^l\sim Bernoulli(\pi^l)$ are scalar discrete Bernoulli $0/1$ random variables with parameter $\pi^l$.  The matrices $W^L,W_1^l, W_2^l, W_3^l, b^L, b_1^l, b_2^l$ are of appropriate dimensions and the element-wise activation functions $a^l(\cdot)$ are assumed to be continuously differentiable and satisfy $a^l(0)=0$ for $l=1\dots L-1$. To simplify the notation, we absorb the additive bias vectors $b^L$, $b^l_1$ and $b^l_2$, usually used in neural network architectures into the weight matrices$W^L$, $W^l_1$ and $W^l_2$, respectively as an additional last column, while adding a last row of zeros of appropriate dimension. We also extend $z^l$  to $z^l = \begin{bmatrix} (z^l)^T & 1 \end{bmatrix}^\top$ and correspondingly $W_3^l$ by adding a last column of zeros and a last row $\begin{bmatrix} 0 &\dots& 0 & 1 \end{bmatrix}$ of appropriate dimension.  The activation on the extended elements of $z^l$ is assumed to be the identity.

We denote collectively $W^L$ and $W_1^l, W_2^l, W_3^l$, $l=1,\ldots,L$ by $W$ and  $\xi^l$, $\pi^l$, $l=1,\ldots,L-1$ by $\Xi$ and $\Pi$, respectively.
We will refer to the calculation of the term $\bar z^l$ in \eqref{eq:struct_res} as the nonlinear path and the calculation of $W_3^lz^{l}$ as the linear or residual path.
Realizations of the Bernoulli $\xi^l$ random variables implement sub-networks of structure~\eqref{eq:struct_res}, where the $l$th nonlinear path is active with probability $\pi^l$. Specifically, if $\xi^{l-1}= 0$ in layer $l-1$, its nonlinear path is always inactive and its output is given by the linear path only. Then, $\bar z^{l-1}=0$ and
\begin{align}\label{eq:lin_layer_reduction}
\begin{split}
	z^l &=   \bar z^{l-1} + W_3^{l-1} z^{l-1} = W_3^{l-1} z^{l-1} \\
	z^{l+1} &= \underbrace{\xi^l W^l_2 a^l\left(W^l_1 z^l\right) }_{=\bar z^l} + W_3^l z^l =\xi^l W^l_2 a^l\left(W^l_1  W_3^{l-1} z^{l-1}\right)  + W_3^l W_3^{l-1} z^{l-1},
\end{split}
\end{align}
and by replacing $W_1^l \leftarrow W_1^lW_3^{l-1}$ and $W_3^l \leftarrow W_3^lW_3^{l-1}$, layer $l-1$ is eliminated from the neural network.

\subsection{Statistical Modeling}\label{sec:statistical_model}
Given a dataset $\mathcal{D}=\lbrace (x_i,y_i)\rbrace_{i=1}^N$, where $x_i \in \mathbb{R}^m$ are input patterns and $y_i \in \mathbb{R}^n$ are the corresponding target values, the goal is to learn all three sets of weights $W$ and appropriate parameters $\Pi=\{\pi^1,\ldots,\pi^{L-1}\}$ for the prior distributions of the RVs $\Xi$ for the NN in \eqref{eq:struct_res}.

For regression tasks, the samples $(x_i,y_i)$ are assumed to be  drawn independently from the Gaussian statistical model
\begin{align}\label{eq:model_regression}
p(Y\mid X,W,\Xi) \sim \prod_{i=1}^N \mathcal{N}\left(y_i;NN(x_i,W,\Xi),\frac{1}{\tau}\right)
\end{align}
with a variance hyper-parameter $\frac{1}{\tau}$. In the following, we assume $\tau=1$ without loss of generality.
 For a $K$-class classification problem, we assume the categorical distribution and write the statistical model as
\begin{align}\label{eq:model_classification}
p(Y\mid X,W,\Xi) \sim \prod_{i=1}^N \prod_{k=1}^{K} (\hat y_{i,k})^{y_{i,k}},
\end{align}
where $y_{i,k}$ and $\hat y_{i,k}$ are the $k$th components of the one-hot coded target and NN output vectors, respectively for the $i$th sample point. We note that $X,Y$ denote collectively the given patterns and targets, respectively and not the underlying RVs. Therefore, \eqref{eq:model_regression} and \eqref{eq:model_classification} give the conditional likelihood of the targets when the NN model is specified.

We have already assumed Bernoulli prior distributions $p(\xi^l\mid \pi^l)$ for the RVs $\xi^l$'s in $\Xi$.
The weights in $W$ receive Normal prior distributions $p(W^l\mid \lambda)$.
Then, the overall prior distribution factorizes as follows
\begin{align*}
p(W,\Xi\mid \Pi, \lambda) = p(W\mid \lambda)\cdot p(\Xi \mid \Pi) = \prod_{l=1}^L p(W^l\mid \lambda) \cdot p(\xi^l\mid \pi^l)
\end{align*}
with
\begin{align*}
\begin{split}
p(W^l\mid \lambda) &\sim \mathcal N \big(0, \lambda^{-1} \mathbf{I}\big)\\
p(\xi^l \mid \pi^l ) &\sim Bernoulli\big(\pi^l\big) \propto ({\pi^l})^{\xi^l} (1-\pi^l)^{1-\xi^l}.
\end{split}
\end{align*}
Next, we place hyper-priors $p(\pi^l \mid \gamma)$ on the variables $\pi^l$ and by
combining the NN's statistical model with the prior distributions and integrating out $\Xi$,
we obtain the posterior
\begin{align*}
p(W,\Pi \mid Y,X,\lambda,\gamma)
&\propto p(Y\mid X,W,\Pi)\cdot p(W\mid\lambda)\cdot p(\Pi\mid\gamma).
\end{align*}

Maximum a Posteriori (MAP) estimation selects the parameters $\Pi$ and $W$  by maximizing the $\log$-posterior:
\begin{align}\label{eq:obj_pi}
\max_{W, \Pi} \left[\log p(Y\mid X,W,\Pi) +\log p(W\mid \lambda)+ \log p(\Pi\mid \gamma) \right].
\end{align}
We select the hyper-prior $p(\Pi\mid\gamma)$ to encourage the automatic removal of nonlinear layer paths that do not sufficiently contribute to the performance of the network during training.
Specifically, the choice of $p(\Pi\mid\gamma)$  affects the resulting values of $\pi^l$ and/or $W_2^l$ during the optimization in (\ref{eq:obj_pi}), the convergence of which to zero signals the removal of the $l$th nonlinear layer path from the NN. This in turn allows the absorption of its linear path to the next layer as shown in \eqref{eq:lin_layer_reduction} and thus the complete pruning of the $l$th layer.
In \cite{GUENTER2024_robust} and for the purpose of unit pruning, Bernoulli priors on vector valued RVs multiplying element-wise the features/filters of DNNs were used and the {\it flattening hyper-prior} was introduced and shown to be instrumental for the success of the resulting simultaneously training/unit-pruning algorithm. Here, we also use a flattening hyper-prior defined as the product over the layers of the network of terms
\begin{align*}
p(\pi \mid \gamma) = \frac{\gamma-1}{\log\gamma}\cdot\frac{1}{1+(\gamma-1)(1-\pi)}, \quad 0<\gamma<1
\end{align*}
for $\pi=\pi^l$, $l=1,\ldots, L-1$. The flattening hyper-prior leads to a robust pruning/learning method by providing a regularization effect that discourages
the premature pruning/survival of network substructures. In \cite{GUENTER2024_robust}, such substructures are individual units, while in this work the structures considered to prune are layers.

\subsection{Model Fitting via a Variational Approximation Approach}\label{sec:Variational_Approach}
For deep neural networks the exact $p(Y \mid X,W,\Pi)$ in  \eqref{eq:obj_pi} resulting from integrating out $\Xi$ is intractable, necessitating the approximation of the posterior distribution of $\Xi$. To this end, we employ variational methods and introduce the variational posterior
\begin{align*}
q(\Xi \mid \Theta) \sim Bernoulli(\Theta),
\end{align*}
and recall the Variational lower bound or {\it ELBO} (see e.g. \cite[Chapter 10]{bishop_pattern_2006})
\begin{align*}
\log p(Y\mid X,W,\Pi)
\geq \int q(\Xi\mid \Theta) \log p(Y\mid X,W,\Xi) \diff \Xi - \infdiv{q(\Xi\mid \Theta)}{p(\Xi\mid \Pi)} \equiv L_B(\Theta,W,\Pi),
\end{align*}
where $KL(\cdot\mid\cdot)$ denotes the Kullback-Leibler divergence.
Then, we proceed by replacing the intractable evidence $\log p(Y\mid X,W,\Pi)$ in \eqref{eq:obj_pi} with the ELBO $L_B(\Theta,W,\Pi)$ to obtain the maximization objective:
\begin{align}\label{eq:Objective_with_KL}
\begin{split}
\max_{W,\Pi,\Theta} \int q(\Xi\mid \Theta) \log p(Y\mid X,W,\Xi) \diff \Xi  - \infdiv{q(\Xi \mid \Theta)}{p(\Xi\mid \Pi)}  + \log p(W\mid \lambda)  + \log p(\Pi\mid \gamma).
\end{split}
\end{align}
The integral term in (\ref{eq:Objective_with_KL}) represents an estimate of the loss over the given samples; its maximization leads to parameters $W$ and $\Theta$ that explain the given dataset best by placing all probability mass of $q(\Xi\mid \Theta)$ where $p(Y\mid X,W,\Xi)$ is highest. Maximizing the second part in (\ref{eq:Objective_with_KL}) or equivalently minimizing the Kullback-Leibler divergence between the variational posterior and the prior distributions on $\Xi$  keeps the approximating distribution close to our prior. Finally, maximizing the last two terms in (\ref{eq:Objective_with_KL}) serves to reduce the complexity of the network by driving the update probabilities of nonessential layer paths and corresponding weights to zero.

Next, in the spirit of \cite{corduneanu_2001_model_sel} and \cite{GUENTER2024_robust}, we first define an optimization problem that can be solved explicitly for the parameters $\Pi$ in terms of $\Theta$ once the prior $p(\Pi\mid\gamma)$ has been specified. This step amounts to a type~II MAP estimation.
Specifically, to maximize the main objective \eqref{eq:Objective_with_KL} with respect to $\Pi$, we can equivalently minimize
\begin{align}\label{eq:maxobj_pi}
&\min_\Pi J_{tot}(\Pi)\equiv \infdiv{q(\Xi\mid \Theta)}{p(\Xi\mid \Pi)} - \log p(\Pi\mid \gamma).
\end{align}
 Noting that $J_{tot}(\Pi)$ in \eqref{eq:maxobj_pi} factorizes over $\pi^l$, $\theta^l$ we can express $J_{tot}(\Pi)$ for the flattening hyper-prior as the sum over the layers of the network of terms
\begin{align*}
	J(\pi) = (1-\theta) \log \frac{1-\theta}{1-\pi} + \theta \log\frac{\theta}{\pi} + \log \left(1+(\gamma-1)(1-\pi)\right).
\end{align*}
for $\pi=\pi^l$ and $\theta=\theta^l$, $l=1,\ldots,L-1$.
Then, we readily obtain
\begin{align}\label{eq:piflat}
\pi^\star = \pi^\star(\theta) = \arg \min_\pi J(\pi) =
\frac{\gamma\theta}{1+\theta(\gamma-1)},
\end{align}
which satisfies $0\leq \pi^\star\leq 1$ if $0\leq \theta\leq 1$.
Furthermore, we obtain
\begin{align}\label{eq:flattening_J}
	J(\pi^\star(\theta)) = \log\gamma - \theta \log \gamma
\end{align}
and
\begin{align*}
	\frac{\partial}{\partial \theta} J(\pi^\star(\theta)) = \frac{\partial}{\partial \theta}\left(\infdiv{q(\xi\mid \theta)}{p(\xi\mid \pi^\star)} - \log p(\pi^\star\mid \gamma)\right) = -\log\gamma, \quad 0\leq \theta \leq 1.
\end{align*}
Thus, the flattening hyper-prior results in a constant term in the gradient of the cost function with respect to the $\Theta$ parameters. This constant term can be easily adjusted by selecting $\gamma$ and used to transparently trade-off the level of network pruning versus loss of prediction accuracy. The reader is referred to \cite{GUENTER2024_robust} for more details in the previous derivations and discussion on the choice of the flattening hyper-prior.

Next, the regularization term on the network's weights in \eqref{eq:Objective_with_KL} comes from the $\log$-probability of the Gaussian prior on $W$ and is expressed as
\begin{align}\label{eq:W-prior}
\log p(W\mid \lambda) = const. -\frac{\lambda}{2}W^\top W,
\end{align}
where $W$ are vectors consisting of the network weights of all nonlinear and linear paths, respectively.

We also express the negative of the integral term in  \eqref{eq:Objective_with_KL} as
\begin{align}\label{eq:Cost}
C(W,\Theta)= \E_{\substack{\Xi\sim  q(\Xi\mid \Theta)}}\big[-\log p(Y\mid X,W, \Xi)\big] =N\cdot \E_{\substack{\Xi\sim  q(\Xi\mid \Theta) \\ (x_i,y_i) \sim p(\mathcal{D})}}\left[-\log p(y_i\mid x_i,W, \Xi)\right],
\end{align}
where $p(\mathcal{D})$ denotes the empirical distribution assigning probability $\frac{1}{N}$ to each sample $(x_i, y_i)$ in the given the dataset $\mathcal{D}=\lbrace (x_i,y_i)\rbrace_{i=1}^N$.
Then, by replacing each $\pi^l$ with the corresponding optimal $\pi^\star(\theta^l)$ from \eqref{eq:piflat} and using \eqref{eq:flattening_J}, \eqref{eq:W-prior} and \eqref{eq:Cost} in \eqref{eq:Objective_with_KL}, dropping constant terms and  switching from maximization to equivalent minimization,  our optimization objective becomes
\begin{align}\label{eq:L_train_obj}
\min_{W,\Theta\in\mathcal{H}} L(W,\Theta) \equalhat C(W,\Theta) + \frac{\lambda}{2}W^\top W-\| \Theta \|_1 \log\gamma.
\end{align}
where $\mathcal{H}$ is the hypercube
\begin{align*}
		\mathcal{H}= \left\{\begin{bmatrix}
		\theta^1,\dots,\theta^L
		\end{bmatrix}\in\mathbb{R}^L\mid0\leq\theta^l\leq 1, \, \forall \,l=1\dots L\right\}.
\end{align*}

\section{Main Results}\label{sec:Main-Results}
\subsection{Characterization of the Solutions of $\min L$}\label{sec:characterization_solutions}
We can express the optimization objective in \eqref{eq:L_train_obj} as:
\begin{align}\label{eq:L_full}
\begin{split}
	&L(W,\Theta) =	\sum_I   g_I(\Theta) \log p(Y\mid X,W,\Xi=I)
	+ \frac{\lambda}{2}W^\top W-\| \Theta \|_1 \log\gamma
\end{split}
\end{align}
where $I=(i_1,\dots,i_L)$ is a multi-index with $i_l=0$ or $1$ defining a particular dropout network and
\begin{align*}
g_I(\Theta) \equalhat 	\prod_{l=1}^L ({\theta^l})^{i_l}(1-\theta^l)^{1-i_l}.
\end{align*}
Then, it follows from \eqref{eq:L_full} that $L(W,\Theta)$ is a multilinear function of $\Theta$, which is box-constrained, i.e. $0\leq\theta^l\leq 1$. This property of \eqref{eq:L_full} is guaranteed by the choice of the flattening hyper-prior that adds only linear terms  with respect to $\Theta$ in the objective and has the profound consequence that the optimal network is deterministic, meaning that the optimal $\theta^l$ and the corresponding RV $\xi^l$ in a layer of the optimal network will be either $0$ or $1$. We formally state this result as the following theorem and provide a simple argument for its validity.

\begin{theorem}
Consider the Neural Network \eqref{eq:struct_res} and the optimization problem \eqref{eq:L_train_obj}.
Then, the minimal values of \eqref{eq:L_train_obj} with respect to $\Theta$ are achieved at the extreme values $\theta^l=0,\ 1$ and the resulting optimal network is deterministic.
\end{theorem}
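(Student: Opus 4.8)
The plan is to exploit the multilinear, i.e.\ coordinate-wise affine, structure of $L(W,\Theta)$ in $\Theta$ that is already exposed in \eqref{eq:L_full}, together with the elementary fact that a function affine in each coordinate over a box attains its minimum at a vertex of that box. Concretely, I would fix the weights $W$ and an arbitrary index $l$, hold $\{\theta^j : j\neq l\}$ fixed, and observe that $\theta^l\mapsto L(W,\Theta)$ is an affine function of $\theta^l$ on $[0,1]$: in the expansion \eqref{eq:L_full} each coefficient $g_I(\Theta)=\prod_k(\theta^k)^{i_k}(1-\theta^k)^{1-i_k}$ has degree at most one in $\theta^l$; the term $\tfrac{\lambda}{2}W^\top W$ does not depend on $\Theta$; and on the hypercube $\mathcal H$ we have $\theta^l\ge 0$, so $\|\Theta\|_1=\sum_k\theta^k$ is itself linear in $\theta^l$. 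Hence the restricted map is affine, and its minimum over $[0,1]$ is attained at $\theta^l=0$ or $\theta^l=1$ (and at both endpoints when the slope vanishes).

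Next I would turn this into a rounding argument. Starting from any feasible $\Theta\in\mathcal H$, replace $\theta^1$ by whichever of $\{0,1\}$ does not increase $L$, then do the same for $\theta^2$, and so on; after $L$ such coordinate steps one reaches a vertex $\Theta^\ast\in\{0,1\}^L$ with $L(W,\Theta^\ast)\le L(W,\Theta)$. Therefore $\min_{\Theta\in\mathcal H}L(W,\Theta)=\min_{\Theta\in\{0,1\}^L}L(W,\Theta)$ for every $W$, the minimum over this finite vertex set is attained, and consequently any global minimizer of \eqref{eq:L_train_obj} can be taken to have $\Theta$ at a vertex of $\mathcal H$.

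Finally, at such a vertex every $\theta^l\in\{0,1\}$, so the variational posterior $q(\Xi\mid\Theta)\sim Bernoulli(\Theta)$ degenerates to a point mass: each $\xi^l$ equals $0$ or $1$ with certainty, and by the reduction in \eqref{eq:lin_layer_reduction} the realized network is deterministic (the nonlinear path of layer $l$ is either absent or present, with no randomness). I do not expect a genuine obstacle here, as this is a standard vertex-optimality statement for multilinear programs and is precisely the formal counterpart of the observation that the flattening hyper-prior contributes only terms linear in $\Theta$; the only points requiring mild care are the degenerate case of a vanishing coefficient of $\theta^l$, where one simply selects either endpoint so that a deterministic minimizer still exists even if it is not unique, and the remark that linearity of $\|\Theta\|_1$ genuinely uses the constraint $\Theta\in\mathcal H$.
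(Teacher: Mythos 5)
Your proposal is correct and takes essentially the same approach as the paper: both arguments rest on the observation from \eqref{eq:L_full} that $L(W,\Theta)$ is affine in each $\theta^l$ separately (with the flattening hyper-prior contributing only the linear term $-\|\Theta\|_1\log\gamma$), so the minimum over the box $\mathcal{H}$ is attained at a vertex. The paper phrases this as a contradiction argument for a putative interior minimizer, whereas you make the same point constructively via coordinate-wise rounding; the degenerate zero-slope case is handled the same way in both.
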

\begin{proof}
Assume that a (local) minimum value of $L(W,\Theta)$ is achieved for $0<\theta^l<1$. By fixing all $W$, $\theta^k$, $k\neq l$, we obtain a linear function of $\theta_l$. Then, assuming $\frac{\partial L}{\partial\theta^l}\neq 0$, $L$ can be further (continuously) minimized by moving $\theta^l$ towards one of $0$ or $1$, which is a contradiction. If $\frac{\partial L}{\partial\theta^l}=0$, $L$ does not depend on $\theta^l$ and we can replace $\theta^l$ with $0$ or $1$, achieving the same minimal value.
\end{proof}

Dropout \cite{Srivastava_2014_Dropout} has been shown to yield well regularized networks. Initialized with $0<\theta<1$, the networks trained with our method enjoy the same regularizing properties as dropout networks during the early stages of training. As the training continues and the $\theta$ parameter approach $0$ or $1$, a well regularized but smaller deterministic network is found that optimally trades-off size/capacity and performance.
We emphasize again that the usage of the flattening hyper-prior in \eqref{eq:obj_pi} guarantees this result, and therefore, not only it plays a critical role for the robustness properties of the resulting pruning/learning algorithm as elaborated in \cite{GUENTER2024_robust}, but also it assures a deterministic optimal network, which is attractive for the reasons discussed above.

\subsection{Optimality Conditions}\label{sec:KKT}
In the following, we show independently of the previous result that the solutions to \eqref{eq:L_train_obj} satisfy $\theta^\star = 1$ or $0$ and also derive specific conditions that distinguish the two cases.	It is advantageous to consider the reduced optimization problem of minimizing
$L(W,\Theta)$ in \eqref{eq:L_train_obj} only  with respect to the  weights $W_2^l$ and parameters $\theta^l$ associated with layer $l$.

We denote collectively the RVs $\xi^{l'}, l'\neq l$ and parameters $\theta^{l'}, l'\neq l$ as $\bar \Xi^l$ and $\bar \Theta^l$, respectively.
Using the fact that $q(\xi^l\mid \theta^l)$ is a Bernoulli distribution with parameter $\theta^l$, we express $C(W, \theta^l, \bar\Theta^l)$ from \eqref{eq:Cost} in a form that exposes its dependence on $\theta^l$ as follows:
\begin{align*}
&C(W,\theta^l,\bar\Theta^l) = \E_{\Xi\sim  q(\Xi\mid \Theta)}\big[-\log p(Y\mid X,W, \Xi)\big]\\
&\hspace{-0.1cm}= \theta^l \underbrace{\E_{\bar \Xi^l\sim  q(\bar \Xi^l\mid \bar \Theta^l)}\hspace{-0.05cm}\big[\hspace{-0.13cm}-\log p(Y\mid X,W,\xi^l=1,\bar \Xi^l)\big]}_{\equalhat C_1^l(W,\bar\Theta^l)} \hspace{-0.03cm}+(1-\theta^l)\underbrace{\E_{\bar \Xi^l\sim  q(\bar \Xi^l\mid \bar \Theta^l)}\hspace{-0.05cm}\big[\hspace{-0.13cm}-\log p(Y\mid X,W,\xi^l=0, \bar\Xi^l)\big]}_{\equalhat C_0^l(W,\bar\Theta^l)}.\nonumber
\end{align*}

Further, we write the weight matrix $W_2^l$ as a column vector $w^l$:
\begin{align*}
W_2^l = \begin{bmatrix}
\rule[.5ex]{1.5em}{0.1pt}&w_1^{lT}&\rule[.5ex]{1.5em}{0.1pt}\\&\vdots&\\\rule[.5ex]{1.5em}{0.1pt}&w_K^{lT}&\rule[.5ex]{1.5em}{0.1pt}
\end{bmatrix} \quad \quad w^l=[w_1^{lT} \dots w_K^{lT}]^T,
\end{align*}
where the column vectors $[w_k^l]_{1\leq k\leq K}$ are the fan-out weights to the units in the next layer.
Then, we express $L(w^l,\theta^l)$ as a function of $w^l$ and $\theta^l$ only:
\begin{align*}
L(w^l,\theta^l) = \theta^lC_1^l(w^l) + (1-\theta^l)C_0^l - \theta^l\log\gamma + \frac{\lambda}{2}\norm{w^l}^2 + const.,
\end{align*}
and note that, by inspection of eq. \eqref{eq:struct_res}, $C_0^l$ is independent of $w^l$.
For the remainder of this section, we use $w=w^l$, $\theta=\theta^l$ and $C_0^l=C_0$, $C_1^l=C_1$ omitting dependence on layer $l$  for a cleaner presentation.

Therefore, the reduced minimization objective becomes
\begin{align} \label{eq:L_train_obj_k_layer}
\begin{split}
\min_{w, \theta} L(w,\theta) &= \theta\underbrace{\left(C_1(w) -C_0 - \log\gamma \right)}_{\equalhat A(w)} + \underbrace{\frac{\lambda}{2}\norm{w}^2 + const.}_{\equalhat B(w)}\\
&\quad s.t. \quad 0\leq\theta\leq 1.
\end{split}
\end{align}
We discuss later how the implications of solving \eqref{eq:L_train_obj_k_layer} carry over to the full optimization problem \eqref{eq:L_train_obj}.
From \eqref{eq:L_train_obj_k_layer}, we obtain the Lagrangian
\begin{align*}
	\mathcal{L}(w,\theta,\mu_0,\mu_1) = \theta A(w) + B(w) -\mu_0\theta+\mu_1(\theta-1),
\end{align*}
where the Lagrange multipliers $\mu_0$ and $\mu_1$ correspond to the constraints $\theta>0$ and $\theta<1$, respectively.
The necessary first order optimality (Karush-Kuhn-Tucker) conditions are (see, for example \cite[pp.342-345]{Lueneberger_Lin_Nonlin_Prog}:
\begin{align}\label{eq:FONC}
	\begin{split}	
	\frac{\partial \mathcal{L}}{\partial w} = \theta\nabla A(w) + \nabla B(w) = 0,\\
	\frac{\partial \mathcal{L}}{\partial \theta} = A(w) -\mu_0 +\mu_1 = 0,\\
	\mu_0,\mu_1 \geq 0 ,\\
	\mu_0\theta = 0,\, \mu_1(\theta-1)=0.
	\end{split}
\end{align}
Further, we consider the Hessian of the Lagrangian
\begin{align*}
	\nabla^2 \mathcal{L}(w,\theta,\mu_0,\mu_1) = \begin{bmatrix}
	\theta\nabla^2A(w)+\nabla^2 B(w) & \nabla A(w) \\
	\nabla A(w)^T  & 0
	\end{bmatrix}
	=\begin{bmatrix}
	\theta\nabla^2 C_1(w) + \lambda I & \nabla C_1(w)\\
	\nabla C_1(w)^T & 0
	\end{bmatrix}
\end{align*}
and also obtain the second order necessary optimality condition as
\begin{align}\label{eq:SONC}
		s^T\nabla^2 \mathcal{L}(w,\theta,\mu_0,\mu_1)s \geq 0 \quad \forall\, s\neq0, s \in \mathcal{M}
\end{align}
where $\mathcal{M}$ is the tangent space of the active optimization constraints
at a solution  $(w,\theta,\mu_0,\mu_1)$ of \eqref{eq:FONC}.
Switching to strict inequality in \eqref{eq:SONC} (and assuming no degeneracy in the active constraints) provides a sufficient condition for $(w,\theta,\mu_0,\mu_1)$ to be a local minimum point.

Next, let $(w^\star,\theta^\star,\mu_0^\star,\mu_1^\star)$ satisfy \eqref{eq:FONC} and \eqref{eq:SONC} and consider the following three cases:
\paragraph{Case 1, $0<\theta^\star<1$:}
Since $0<\theta^\star<1$, it follows that $\mu_0^\star=\mu_1^\star=0$ and to satisfy the first order optimality conditions \eqref{eq:FONC}, we must have $A(w^\star)=0$ and $\nabla A(w^\star) = - \frac{\nabla B(w^\star)}{\theta^\star}=- \frac{\lambda} {\theta^\star}w^* \neq 0$, because otherwise $w^\star=0$ and hence $C_1=C_0\Rightarrow A(w^\star)=-\log\gamma\neq 0$ for $0<\gamma<1$, which is a contradiction.
From the second order necessary condition, we have (no active constraints)
\begin{align*}
	s^T\nabla^2 \mathcal{L}(w^\star,\theta^\star,\mu_0^\star,\mu_1^\star)s \geq 0 \quad \forall\, s\neq0.
\end{align*}
Next, consider a nonzero entry  $\left[\nabla A(w^\star)\right]_i\neq 0$ in $\nabla A(w^\star)$ and the sub-matrix
\begin{align*}
	\begin{bmatrix}
	m_{ii}^\star & \left[\nabla A(w^\star)\right]_i \\ \left[\nabla A(w^\star)\right]_i & 0
	\end{bmatrix}\quad \text{with} \quad m_{ii}^\star = \left[\theta^\star\nabla^2A(w^\star)+\nabla^2 B(w^\star)\right]_{ii},
\end{align*}
the $i$th diagonal entry of $\theta^\star\nabla^2A(w^\star)+\nabla^2 B(w^\star)$. The determinant of this matrix is a principal minor of the Hessian $\nabla^2\mathcal{L}(w^\star,\theta^\star,\mu_0^\star,\mu_1^\star)$ of the reduced optimization problem \eqref{eq:L_train_obj_k_layer} as well as of the Hessian of the full optimization problem \eqref{eq:L_train_obj}.
It holds
\begin{align*}
	det\left(\begin{bmatrix}
	m_{ii}^\star & \left[\nabla A(w^\star)\right]_i \\ \left[\nabla A(w^\star)\right]_i & 0
	\end{bmatrix}\right) = -\left[\nabla A(w^\star)\right]_i^2 < 0
\end{align*}
and therefore $\nabla^2\mathcal{L}(w^\star,\theta^\star,\mu_0^\star,\mu_1^\star)$, as well as the Hessian of the overall optimization problem \eqref{eq:L_train_obj}, can not be positive semidefinite and the second order necessary optimality condition \eqref{eq:SONC} fails.
As a result, ``Case 1: $0<\theta^\star<1$'' does not produce any local minima for the reduced optimization problem \eqref{eq:L_train_obj_k_layer}, as well as the overall optimization problem \eqref{eq:L_train_obj}.

\paragraph{Case 2, $\theta^\star=0$:}
It follows that $\mu_1^\star=0$ and from the first order optimality conditions we obtain $\nabla B(w^\star) = \lambda w^\star = 0 \Leftrightarrow w^\star=0$ and also $\mu_0=A(w^\star)=C_1(w^\star)-C_0-\log\gamma=-\log\gamma>0$ for any $0<\gamma<1$ since for $w^\star=0$, it is $C_1(w^\star)=C_0$. Furthermore,
the tangent space is $\mathcal{M}=\{s \mid\begin{bmatrix}
0&\dots&0&-1
\end{bmatrix}s=0\}$, i.e., it consists of vectors $s=[s_w^T\ \,\, 0]^T$, where $s_w$ is any vector of the same dimension as $w$. It holds
\begin{align*}
s^T\nabla^2 \mathcal{L}(w^\star,\theta^\star,\mu_0^\star,\mu_1^\star)s = s_w^T \left(\lambda I\right)s_w > 0,
\end{align*}
$\forall s\neq 0\Leftrightarrow \forall s_w\neq 0$ since $\lambda > 0$ and the second order sufficient optimality conditions are satisfied. Therefore, $(\theta^\star=0,\ w^\star=0)$ is a local minimum of the reduced optimization problem \eqref{eq:L_train_obj_k_layer} and indeed of the full optimization problem \eqref{eq:L_train_obj} since the optimality conditions hold independently of the values of the remaining NN weights and $\theta$-parameters.

\paragraph{Case 3, $\theta^\star=1$:}
It follows that $\mu_0^\star=0$ and from the first order optimality conditions, we obtain  $\nabla A(w^\star)+\nabla B(w^\star) = \nabla C_1(w^\star)+ \lambda w^\star = 0$ and also
$\mu_1=-A(w^\star)=-C_1(w^\star)+C_0+\log\gamma\geq 0 \Leftrightarrow
C_1(w^\star)-C_0\leq\log\gamma$.
The tangent space is $\mathcal{M}=\{s\mid\begin{bmatrix}
0&\dots&0&1
\end{bmatrix}s=0\}$ and it consists of vectors $s=[s_w^T\ 0]^T$, where $s_w$ is any vector of the same dimension as $w$. Then, the second order necessary optimality conditions require
\begin{align*}
		s^T\nabla^2 \mathcal{L}(w^\star,\theta^\star,\mu_0^\star,\mu_1^\star)s = s_w^T \left(\nabla^2 C_1(w^\star) + \lambda I\right)s_w\geq0,\ \ \forall s_w.
\end{align*}
In summary, the necessary optimality conditions when $\theta^\star=1$ are:
\begin{itemize}
	\item[O1)] $\nabla^2 C_1(w^\star) + \lambda I \succeq 0$,
	\item[O2)] $A(w^\star) = -\mu_1^\star \leq 0 \Leftrightarrow C_1(w^\star)-C_0 \leq \log\gamma$,
	\item[O3)]	$\nabla C_1(w^\star)+\lambda w^\star = 0$.
\end{itemize}
The above conditions are necessary for $(w^\star, \theta^\star=1)$ to be a local minimum of both the reduced \eqref{eq:L_train_obj_k_layer} and full \eqref{eq:L_train_obj} optimization problems, although they are not expected to characterize the optimal $w^\star$ and $\theta^\star$ for \eqref{eq:L_train_obj_k_layer} independently of remaining NN weights and  $\theta$ parameters.
Assuming that the matrix in O1) is strictly positive definite implies along with O2) and O3) that $(w^\star, \theta^\star=1)$ is a local minimum of  \eqref{eq:L_train_obj_k_layer}.
Also, Condition O2) has the interpretation that surviving layers, i.e. layers with $\theta^\star=1$, should have a cost benefit of at least $-\log\gamma>0$.

\section{Learning Algorithm}\label{sec:algorithm}
We now turn to formulating the learning algorithm for practically find solutions of \eqref{eq:L_train_obj}. We employ projected stochastic gradient descent (SGD) to minimize $L(W,\Theta)$ in \eqref{eq:L_train_obj}
over the NN weights $W$ and variational parameters $\Theta$ as exact solutions are intractable. The projection is  with respect to the Bernoulli parameters $\Theta$ and it ensures that they remain in $[0,1]$.
The required gradients are computed in the following subsections.
\subsection{Learning the Network's Weights $W$}
From \eqref{eq:L_train_obj}, we obtain the gradient of $L(W,\Theta)$ with respect to weights $W^l_1, W^l_2$ and $W_3^l$ of the $l$th layer as
\begin{align}\label{eq:grad_w}
	\frac{\partial L(W,\Theta)}{\partial W^l_j} =\frac{\partial C(W,\Theta)}{\partial W^l_j} + \lambda W^l_j\, , \,\, j=1,2,3
\end{align}
To calculate the first term in \eqref{eq:grad_w}, we first estimate the expectation over the RVs $\Xi$ in \eqref{eq:Cost} with a sample $\hat\Xi\sim Bernoulli(\Theta)$ and since typically the dataset is large, we also approximate the $\log$-likelihood in \eqref{eq:Cost} with a sub-sampled dataset (mini-batch) $\mathcal{S} = \lbrace (x_i,y_i)\rbrace_{i=1}^B$ of size $B$. Thus, we have
\begin{align}\label{eq:MiniBatch_approx}
	C(W,\Theta) \approx \frac{N}{B}\sum_{i=1}^{B}\big[-\log p(y_i\mid x_i,W,\hat\Xi)\big].
\end{align}
We employ a common sample $\hat\Xi$ for all mini-batch samples to benefit from some computational savings \cite{graham2015efficient} and since we noticed no perceptible difference in our simulations when using different $\hat\Xi_i$ realizations for each mini-batch sample.
We define the gradient of the partial cost $-\log p(y_i\mid x_i,W,\hat\Xi)$  with respect to signals $z^l$ and $\bar z^l$ as
\begin{equation*}
	\delta^l_i\equiv\frac{\partial \left(-\log p(y_i\mid x_i,W,\hat\Xi)\right)}{\partial z^l}
	\quad\text{and}\quad \bar\delta^l_i\equiv\frac{\partial \left(-\log p(y_i\mid x_i,W,\hat\Xi)\right)}{\partial \bar z^l},\quad l=1,\ldots,L,
	\label{delta_l}
\end{equation*}
and obtain from a straightforward adaptation of the standard backpropagation algorithm (see for example \cite[Chapter 5]{bishop_pattern_2006}) for the residual-type network structure \eqref{eq:struct_res} the following recursive relations:
\begin{align*}
	\delta^L_i &=W^{L\top}\bar\delta^L_i;\nonumber\\
	\bar\delta^l_i &=\delta^{l+1}_i;\quad
	\delta^l_i =\hat\xi^l\left(W_1^{l\top} \text{diag}\left\{{a^l}^{\prime}\left(W_1^l z^l\right)\right\}\right)W_2^{l\top}\bar\delta^l_i + V^{l\top}\delta^{l+1}_i \quad \quad\text{for}\quad l=L-1, \dots, 1.
\end{align*}
Here, $\delta_i^l$, $\bar\delta_i^l$ are column vectors and computation of  $\bar\delta_i^{L}$  involves the output activation and loss function considered,  that is, the underlying statistical model.
More specifically, for both the regression and classification tasks with linear output activation/normal output distribution and softmax output activation/categorical output distribution, we have from \eqref{eq:model_regression} and \eqref{eq:model_classification}, respectively:
\begin{align*}
	\bar\delta^L_i\equiv\frac{\partial \left(-\log p(y_i\mid x_i,W,\hat\Xi)\right)}{\partial \bar z^L} & =y_i-\hat{y}_i.
\end{align*}
Then, the gradients of the objective $L(W,\Theta)$ with respect to the network weights $W^l$ can be approximated by
\begin{align}\label{eq:grad_mL}
	\frac{\partial L(W,\Theta)}{\partial W^L} &\approx \widehat{\frac{\partial L}{\partial W^L}}= \frac{N}{B}\sum_{i=1}^{B} \bar\delta^{L}_i \cdot{z^L_i}^\top + \lambda W^L,\quad  \\
	\frac{\partial L(W,\Theta)}{\partial W^l_1} &\approx \widehat{\frac{\partial L}{\partial W_1^l}}=\frac{N}{B}\sum_{i=1}^{B} \hat \xi^l \cdot{\left( \text{diag}\left\{{a^l}^{\prime}\left(W_1^l z^l\right)\right\} W_2^{l\top} \bar \delta^l_i {z}^{l\top}\right)} + \lambda W^l_1,\quad l=1,\ldots,L-1\\
	\frac{\partial L(W,\Theta)}{\partial W^l_2} &\approx \widehat{\frac{\partial L}{\partial W_2^l}}=\frac{N}{B}\sum_{i=1}^{B} \hat \xi^l\bar\delta^{l}_i \cdot{a^l\left(W_1^lz^l\right)}^\top + \lambda W^l_2,\quad l=1,\ldots,L-1\label{eq:grad_mL2}
\end{align}
and
\begin{align}\label{eq:grad_m}
	\frac{\partial L(W,\Theta)}{\partial W_3^l} \approx \widehat{\frac{\partial L}{\partial W_3^l}}=\frac{N}{B}\sum_{i=1}^{B}\delta^{l+1}_i \cdot {z^l_i}^\top + \lambda W_3^l,\quad l=1,\ldots,L-1.
\end{align}

\subsection{Learning the Pruning Parameters $\theta$}\label{sec:learning_theta}\label{sec:deltaC_taylor}
From \eqref{eq:L_train_obj_k_layer}, we readily compute the derivative with respect to $\theta^l$ as
\begin{align}\label{eq:grad_theta}
\frac{\partial L(W,\theta^l,\bar\Theta^l)}{\partial \theta^l} =  C_1^l-C_0^l -\log\gamma.
\end{align}
We remark that the gradient in  \eqref{eq:grad_theta} depends on $\log\gamma$ rather than explicitly on $\gamma$, which makes possible to numerically tolerate small values of $\gamma$ for achieving appropriate regularization.

In \eqref{eq:grad_theta}, $C_1^l-C_0^l$ is the difference in the total cost with the particular nonlinear layer path switched on and off.
A large negative value for the difference $C_1^l-C_0^l$ indicates high importance of this path for the performance of the network.
Then in minimizing $L(W,\theta^l,\bar\Theta^l)$ via gradient descent, a negative value for $C_1^l-C_0^l-\log\gamma$ will make the corresponding $\theta^l$ grow, leading to the path being switched on and its weights being optimized more frequently. On the other hand, a positive $C_1^l-C_0^l-\log\gamma$ leads to smaller $\theta^l$ and less frequent optimization of the weights of the corresponding path and as a consequence the weight decay term drives these weights to zero.

Calculating the expected values $C_1^l,\,C_0^l$ requires evaluating the network with all combinations of paths switched on and off. Depending on the total number of layers $L$, this may be computationally expensive, making it necessary to approximate them instead.
In the following, we approximate the gradient of \eqref{eq:L_train_obj}  with respect to $\theta^l$ as
\begin{align}\label{eq:rho_grads}
	\frac{\partial L} {\partial \theta^l}\approx \widehat{\frac{\partial L} {\partial \theta^l}} = \widehat{\Delta C^l} - \log\gamma, \quad \text{with} \quad \widehat{\Delta C^l} \approx C_1^l-C_0^l,
\end{align}
where we obtain the estimate $\widehat{\Delta C^l}$ from Monte Carlo estimates over data $\mathcal{D}$ and $\Xi$ of the cost in \eqref{eq:MiniBatch_approx}. Specifically, each mini-batch computation gives an unbiased estimate of $C_1^l$ or $C_0^l$ depending on whether $\hat \xi^l=1$ or $0$:
\begin{equation}\label{eq:Chats_sampling}
	C_{j}^l \approx \hat{C}_{j}^l =\frac{N}{B} \sum_{i=1}^B -\log p(y_i \mid x_i,W,\hat{\xi}^l=j, \hat{\bar{\Xi}}^l), \quad j=0,1.
\end{equation}
Subsequently, the network is evaluated a second time switching the value of $\hat \xi^l$ while the sample $\hat{\bar\Xi}^l$ is kept the same. This approach requires  $L+1$ evaluations of the network to obtain unbiased estimates of $C_1^l$ and $C_0^l$ for all layers, where $L$ is the number of hidden layers in the network.
Then, the estimator for the difference is:
\begin{align}\label{eq:Sampling_estimator}
	\begin{split}
		C_1^l-C_0^l \approx& \widehat{\Delta C^l}= \frac{N}{B} \sum_{i=1}^B \hspace{-0.1cm}-\log p(y_i \mid x_i,W,\hat \xi^l=1, \hat{\bar\Xi}^l) \hspace{-0.05cm}- \frac{N}{B} \sum_{i=1}^B \hspace{-0.1cm}- \log p(y_i \mid x_i,W,\hat \xi^l=0, \hat{\bar\Xi}^l)\\
		&=\frac{N}{B} \sum_{i=1}^B \log \left(\frac{p(y_i \mid x_i,W,\hat \xi^l=0, \hat{\bar\Xi}^l)}{p(y_i \mid x_i,W,\hat\xi^l=1, \hat{\bar\Xi}^l)}\right).
	\end{split}
\end{align}
While this unbiased estimator was found to have relatively low variance in practice and the $L+1$ evaluations of forward passes of the network can be done in parallel, it may still be computationally unattractive for very deep networks even though the required number of forward network evaluations is linear in the number of layers of the network. Then, the difference $C_1^l-C_0^l$ can be approximated, for example, by a first order Taylor approximation
as shown before in \cite{GUENTER2024_robust} for the case of unit pruning.
We extend the Taylor approximation of $\widehat{\Delta C^l}$ for the case of layer pruning as follows:
\begin{align*}
	\widehat{\Delta C^l} \approx \Delta\xi^l \frac{\partial C(W,\theta^l,\bar\Theta^l)}{\partial \xi^l}
	=\Delta\xi^l\frac{\partial\E_{\bar\Xi^l}\big[-\log p(Y\mid X,W, \bar\Xi^l,\xi^l)\big]}{\partial\xi^l},
\end{align*}
where for $\xi^l=0$, $\Delta\xi^l=1$ and for $\xi^l=1$, $\Delta\xi^l=-1$. In both cases, estimating the above expectation via sample means over the mini-batch data, yields
\begin{align}\label{eq:Taylor_estimator}
	C_1^l-C_0^l \approx \widehat{\Delta C^l}=\frac{N}{B}\sum_{i=1}^B \bar\delta_i^lW_2^lz_i^l,
\end{align}
with $\bar\delta_i^l$ and $z_i^l$ depending on $\hat{\Xi}$. The estimator in \eqref{eq:Taylor_estimator} is equivalent to the Straight-Through estimator proposed in \cite{bengio_estimating_2013}. It is biased after dropping the higher order terms in the Taylor series expansion, however, a more detailed analysis in \cite{GUENTER2024_robust} shows that for $W_2^l\rightarrow 0$, this estimator is asymptotically unbiased.

\subsection{Formulation of the Stochastic Gradient Descent Algorithm}
We now present the proposed concurrent learning and pruning algorithm based on minimizing the objective in  \eqref{eq:L_train_obj}.
This algorithm generates discrete-time sequences $\{W^l_1(n), W^l_2(n), W_3^l(n), \theta^l(n)\}_{n\geq0}$ for each nonlinear layer path of the network to minimize the objective \eqref{eq:L_train_obj} based on projected stochastic gradient descent.
More specifically, we consider the sequences
\begin{align}\label{eq:DE_W}
W^L(n+1) = W^L(n)-a(n)\widehat{\frac{\partial L}{\partial W^L}},
\end{align}
and for all $l=1,\dots, L-1$, $j=1,2,3$
\begin{align}\label{eq:DE}
\begin{split}
W_j^l(n+1) &= W_j^l(n)-a(n)\widehat{\frac{\partial L}{\partial W_j^l}},
\quad\text{and}\quad
\theta^l(n+1) =\Pi_\mathcal{H}\left[\theta^l(n)-a(n)\widehat{\frac{\partial L} {\partial \theta^l}}\right]
\end{split}
\end{align}
where the estimates of the gradient of the objective function $L(W(n),\Theta(n))$ are obtained from \eqref{eq:grad_mL}-\eqref{eq:grad_m}  and \eqref{eq:rho_grads} together with \eqref{eq:Sampling_estimator} or \eqref{eq:Taylor_estimator} in practice.
The projection $\Pi_\mathcal{H}$ can be equivalently written as
\begin{align}\label{eq:DE_th}
\begin{split}
	\theta^l(n+1) =\Pi_\mathcal{H}\left[\theta^l(n)-a(n)\widehat{\frac{\partial L} {\partial \theta^l}}\right] =  &\max\left\{\min\left\{\theta^l(n)-a(n)\widehat{\frac{\partial L} {\partial \theta^l}},\,1\right\}, \,0\right\}.
\end{split}
\end{align}
Finally, we assume that the stepsize $a(n)$ satisfies the Robbins-Monro conditions:
\begin{equation}\label{eq:RM}
\sum_{n=0}^\infty a(n) = \infty  \quad \text{and}\quad \sum_{n=0}^\infty a(n)^2 < \infty.
\end{equation}
Pseudo-code of the algorithm is given in Algorithm \ref{alg:learning_algo}.
The algorithm assumes a given dataset: $\mathcal{D}=\lbrace (x_i,y_i)\rbrace_{i=1}^N$, mini-batch size $B$ and an initial network with the structure of \eqref{eq:struct_res} characterized by the number of layers and the number of units in each layer. We select hyper-parameters $0<\gamma<1$ for the flattening hyper-priors and $\lambda>0$ for weight regularization.
We remark that as the dataset size $N$ increases and because $C_0^l$ and $C_1^l$ are total expected errors over the dataset, $C_0^l-C_1^l$ roughly increases linearly with $N$. This behavior is consistent with the Bayesian approach whereby using more data samples to estimate the posterior distribution leads to a diminishing influence of prior information. Therefore, to induce sufficient pruning and for the previous pruning conditions to be meaningful, the hyper-parameter $\log\gamma$ for the flattening hyper-prior  needs to be matched appropriately to the size of the dataset.

Each iteration of the algorithm consists of five main steps. In Step~1, a mini-batch $\mathcal{S}=\lbrace ( x_i, y_i)\rbrace_{i=1}^B$ is sampled from the data $\mathcal{D}$ with replacement and a sample $\hat \Xi \sim Bernoulli(\Theta)$ is obtained and used to predict the network output and to approximate needed expectations. In Step~2, the gradients of the objective \eqref{eq:L_train_obj} with respect to the network weights $W$ are computed via backpropagation and $\frac{\partial C}{\partial\theta}=C_1^l-C_0^l$ is approximated for each layer using \eqref{eq:Sampling_estimator} or the Taylor approximation \eqref{eq:Taylor_estimator} from Section~\ref{sec:deltaC_taylor}. Then, the gradients of the objective with respect to $\theta$ are obtained from \eqref{eq:rho_grads}. Step~3 constitutes the learning phase in which the network weights $W$ and variational parameters $\Theta$ are updated via gradient descent utilizing the previously computed gradients. The convergence results of Section~\ref{sec:convergence_results} require that the stepsize $a(n)$ for the gradient descent update satisfies the Robbins-Monro conditions \eqref{eq:RM}
and that the $\theta$-parameters are projected onto the interval $[0,1]$ after the update step.

In Step~4, we identify the layers that can be pruned away and remove them from the network to reduce the computational cost in further training iterations.
Based on Theorem~\ref{thm:Alg_conv} in Section \ref{sec:convergence_results}, layers can be safely removed from the network if the weights $w$ and update rate $\theta$ of a nonlinear layer path enter and remain in the region $\mathcal{A}_0$ defined in \eqref{eq:RoA}, since then convergence $\{w\rightarrow 0,\, \theta \rightarrow 0\}$ is guaranteed for this layer.
This result is of clear theoretical value but difficult to utilize in practice. However, it points to more practical conditions for layer removal that we have found to work well. Specifically, it is shown in Section~\ref{sec:convergence_results} (see \eqref{eq:Sys_GrDes}), that $\theta\rightarrow 0$ implies $w\rightarrow 0$. Then, we assume that convergence for a layer has been achieved once its $\theta\leq\theta_{tol}$, a user-defined parameter, and absorb the layer for the remaining iterations.

Finally in Step~5, if the algorithm has converged i.e., the estimated gradients of the objective \eqref{eq:L_train_obj} with respect to $W$ and $\Theta$ are small, or a maximum number of iterations has been reached, we exit the training process. Otherwise, we set $n=n+1$ and repeat the previous five steps.

We remark that our algorithm is applicable to both, fully connected and convolutional networks. Although, we present details for the fully connected case for reasons of brevity, extension of our algorithm to the case of a convolutional layer simply entails the introduction of a Bernoulli random variable $\xi$ for the collection of filter matrices in a layer. Then, $\xi$ multiplies again the nonlinear layer path output and if $\xi$ is zero, this path is inactive. Only minor modifications are required for the gradient computation of the performance objective with respect to the variational parameters; these gradients are used for learning the posterior distributions over the random variables $\xi$ and in turn for selecting which of the convolutional layers to prune and which to keep. In Section~\ref{sec:simulations},  we apply the algorithm also to convolutional neural networks and compare its performance with competing methods.

\begin{algorithm}
	{
		\caption{Learning/Pruning Algorithm}
		\label{alg:learning_algo}
		\begin{algorithmic}[1]	
			\REQUIRE  Dataset $\mathcal{D}=\lbrace (x_i,y_i)\rbrace_{i=1}^N$ and mini-batch size $B$, Initial Network Structure with Initial Weights $W(0), V(0)$ and parameters $\Theta(0)$;
			Hyper-parameters:  $\log\gamma<0$, $\lambda>0$, $\theta_{tol}>0$. \vspace{-0.3cm}
			\\ \hrulefill \\
			\STATE initialize n=0
			\WHILE{Training has not converged or exceeded the maximum number of iterations}
			\vspace{-0.3cm}
			\STATE 
			\hrulefill \\
			\textbf{STEP 1: Forward Pass}\\
			\STATE Sample $B$ times with replacement from the dataset $\mathcal{D}$ to obtain samples $\mathcal{S}=\lbrace ( x_i, y_i)\rbrace_{i=1}^B$.
			\STATE Sample a common realization of the network by sampling $\hat \Xi \sim Bernoulli(\Theta)$.
			\STATE Using current weights $W(n)$, predict the network's output $\hat y_i = NN(x_i;W,\hat \Xi) \ \forall i=1\dots B$ as in \eqref{eq:struct_res}.
			\vspace{-0.3cm}
			\\ \hrulefill \\
			\textbf{STEP 2: Backpropagation Phase}\\
			\STATE Approximate the gradient  with respect to weights $g^l_{W_j} = \widehat{\frac{\partial L(W,\Theta)}{\partial W^l_j}}\, , j=1,2,3, \ \forall l=1\dots L$  using  \eqref{eq:grad_mL}-\eqref{eq:grad_m} with the sample $\hat \Xi$ and $\mathcal{S}$.
			\STATE Approximate $\frac{\partial C}{\partial\theta}=C_1^l-C_0^l$ for each layer in the network as described in Section \ref{sec:deltaC_taylor}.
			\STATE Approximate the elements of the gradient  with respect to $\theta^l$, $g_\theta^l = \widehat{ \frac{\partial L(W,\Theta}{\partial \theta^l}} \ \forall l=1\dots L$ with  \eqref{eq:grad_theta} and using the previous approximations of $C_1^l-C_0^l$.
			\vspace{-0.3cm}
			\\ \hrulefill \\
			\textbf{STEP 3: Learning Phase}\\
			\STATE  Take gradient steps $W^l_j(n+1) = W^l_j(n) - a(n) g^l_{W_j}, \, j=1,2,3$  and\\ $\theta^l(n+1) = \theta^l(n) - a(n) g_\theta^l$ for appropriate step size $a(n)$ satisfying: $\sum_{n=0}^\infty a(n) = \infty$ and $\sum_{n=0}^\infty a(n)^2 < \infty.$\\
			\STATE Project $\theta^l(n+1) = \max\left\{\min\left\{\theta^l(n+1),\,1\right\}, \,0\right\}$
			\vspace{-0.3cm}
			\\ \hrulefill \\
			\textbf{STEP 4: Network Pruning Phase}\\
			\FOR{each hidden layer $l$}
			\IF { $\theta(n+1)<\theta_{tol}$}
			\STATE Prune the  layer by setting its weights to zero and removing it from the network (see eq. \eqref{eq:lin_layer_reduction}).
\STATE If possible, absorb the linear path of layer $l$ to the preceding or following layer.
			\ENDIF
			\ENDFOR
			\vspace{-0.3cm}
			\\ \hrulefill \\
			\textbf{STEP 5: Check for Convergence}
			\STATE If the magnitude of the gradients $g^l_W$ and $g_\theta^l$ is less than a specified tolerance, the algorithm has converged.
			\\ \hrulefill \\
			\STATE set $n=n+1$
			\ENDWHILE
		\end{algorithmic}
	}
\end{algorithm}

\section{Convergence Results}\label{sec:convergence_results}
In this section, we derive convergence results for the algorithm proposed in Section  \ref{sec:algorithm} by employing the continuous-time ordinary differential equations (ODE) defined by the expectations of the stochastic gradients.
Following \cite[pp.106-107]{kushner2003stochastic}, we construct and consider the ODE:
\begin{align*}
\left\{\dot W_j^l =-\E_{\Xi,\mathcal{D}} \left[ \widehat{\frac{\partial L}{\partial W_j^l}}\right],
 \quad \dot \theta^l =-\E_{\Xi,\mathcal{D}} \left[ \widehat{\frac{\partial L}{\partial\theta^l}}  \right]-h^l,\ \ l=1,\dots, L-1,\ \ j=1,2,3 \right\},
\end{align*}
with $\widehat{\frac{\partial L}{\partial W_j^l}}$, $\widehat{\frac{\partial L}{\partial\theta^l}}$ as defined in Section~\ref{sec:algorithm} and
\begin{align}\label{eq:hl}
	h^l \equiv \begin{cases}
	-\left[C_1^l-C_0^l -\log\gamma\right]_{+} \quad &\theta=0\\
	0 &0<\theta<1\\
	\left[C_1^l-C_0^l -\log\gamma\right]_{-} \quad &\theta=1
	\end{cases},
\end{align}
with $\left[\cdot\right]_+$, $\left[\cdot\right]_-$ denoting the positive and negative part of their argument, respectively.

Since the estimated gradients are unbiased (see discussion in Section~\ref{sec:learning_theta} and \ref{app:assumption_verification}), we have:
\begin{align}\label{eq:ODE_all_weights_unb}
\left\{\dot W_j^l =-\frac{\partial L}{\partial W_j^l},
 \quad \dot \theta^l =-\frac{\partial L}{\partial \theta^l}-h^l,\ \ l=1,\dots, L-1,\ \ j=1,2,3 \right\}.
\end{align}

Next, we establish that the stationary points of the ODE \eqref{eq:ODE_all_weights_unb} are the same with the stationary points of the optimization problem \eqref{eq:L_train_obj}. The stationary points of the ODE satisfy:
\begin{align}\label{eq:ODE_stationary}
\begin{split}
\dot W_j^l =-\frac{\partial L}{\partial W_j^l}=0,\ \ j=1,2,3\\
\dot \theta^l =-\frac{\partial L}{\partial \theta^l}-h^l=0,
\end{split}
\end{align}
for $l=1,\dots, L-1$.
On the other hand, expressing the Lagrangian of \eqref{eq:L_train_obj} in a similar fashion with the analysis of the reduced optimization problem \eqref{eq:L_train_obj_k_layer}, we obtain the first order optimality conditions defining the stationary points of \eqref{eq:L_train_obj} as:
\begin{align}\label{eq:FONC_full}
	\begin{split}	
	\frac{\partial L}{\partial W^l_j} = 0,\ j=1,2,3\\
	\frac{\partial L}{\partial \theta^l} -\mu_0^l +\mu_1^l = 0,\\
	\mu_0^l,\mu_1^l \geq 0 ,\\
	\mu_0^l\theta^l = 0,\, \mu_1^l(\theta^l-1)=0,
	\end{split}
\end{align}
for $l=1,\dots, L-1$. Further, proceeding as in Section~\ref{sec:KKT} (see Cases~1 to~3), we can establish that
\begin{align}\label{eq:mus}
\mu_0^l=\left\{\begin{array}{lc} [C_1^l-C_0^l-\log\gamma]_{+}&\ \theta^l=0\\
0&\  0<\theta^l\leq 1\end{array}\right. &\ \rm{and}\ &
\mu_1^l=\left\{\begin{array}{ll}  [C_1^l-C_0^l-\log\gamma]_{-}&\ \theta^l=1\\ 0 &\  0\leq\theta^l< 1\end{array}\right.
\end{align}
Then, comparing \eqref{eq:hl} and \eqref{eq:mus}, it is clear that the solutions of \eqref{eq:ODE_stationary} and \eqref{eq:FONC_full} expressing the stationary points of the ODE \eqref{eq:ODE_all_weights_unb} and the solutions of the optimization problem \eqref{eq:L_train_obj} are identical.

\subsection{Convergence to Solutions of \eqref{eq:L_train_obj} and Deterministic Networks}
The next theorems present the main convergence results.
\begin{theorem}\label{thm:Alg_conv}
	Consider the sequence $x(n)\equiv\{W_1^l(n),W_2^l(n),W_3^l(n),\theta^l(n), \, \forall l=1,\dots, L-1\}_{n\geq0}$ of all weights and their update rates of the Neural Network \eqref{eq:struct_res} as generated by \eqref{eq:DE_W}-\eqref{eq:DE_th} with parameter $\lambda>0$ and $\log\gamma<0$ and a step size satisfying \eqref{eq:RM}.
	Assume that all weights of the network $W$ remain bounded, i.e., assume for all $n$
\begin{align}\label{eq:weight_bound}
	\norm{W^l_j(n)} \leq \phi_{max}<\infty, \, j=1,2,3,\ \ l=1,\ldots,L-1.
\end{align}
 Further, assume that all activation functions used in the Neural Network \eqref{eq:struct_res} are continuously differentiable with bounded derivative $a^{l'}(\cdot)$ and satisfy $a^l(0)=0$. Lastly, assume that the data satisfies
	\begin{align}\label{eq:data_moments}
	\begin{split}
		\E_{x\sim p(x)}\norm{x}^k \leq S_{xk} < \infty,\,\, k=1,2,3,4 \quad
		\text{and} \quad \E_{y\sim p(y|x)}\norm{y}^k\leq S_{yk} < \infty, \,\, k=1,2.
	\end{split}
	\end{align}
	Then, the limit of a convergent subsequence of $x(n)$ satisfies the projected ODE \eqref{eq:ODE_all_weights_unb} and $x(n)$ converges almost surely to a stationary set of the ODE in $\mathcal{H}$.	
\end{theorem}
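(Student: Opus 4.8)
The plan is to invoke the ODE method for constrained stochastic approximation, following the framework of \cite{kushner2003stochastic} already cited in the paper. First I would rewrite the recursions \eqref{eq:DE_W}--\eqref{eq:DE_th} in the canonical form $x(n+1)=\Pi_{\mathcal{G}}\!\left[x(n)+a(n)\big(\bar g(x(n))+\delta M(n)+\beta(n)\big)\right]$, where $\mathcal{G}=\mathbb{R}^{d_W}\times\mathcal{H}$ encodes that only the $\theta$-block is projected, $\bar g(x)=-\nabla_{(W,\Theta)}L(W,\Theta)$ is the mean vector field, $\delta M(n)$ is the stochastic estimation error, and $\beta(n)$ collects any bias (which vanishes for the unbiased estimators \eqref{eq:grad_mL}--\eqref{eq:grad_m} and \eqref{eq:rho_grads} with \eqref{eq:Sampling_estimator}, as argued in Section~\ref{sec:learning_theta} and Section~\ref{app:assumption_verification}). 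With that choice $\delta M(n)$ is a martingale-difference sequence with respect to the natural filtration $\mathcal{F}_n=\sigma(x(0),\hat\Xi(k),\mathcal{S}(k),\ k<n)$, and the reflection term $h^l$ in \eqref{eq:ODE_all_weights_unb} is exactly the one induced by the box projection $\Pi_\mathcal{H}$ on the $\theta$-coordinates.

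The bulk of the work is verifying the hypotheses of the convergence theorem for this recursion, which I would organize in three points. (i) \emph{Compactness of the iterate range}: the $\theta$-components lie in $\mathcal{H}$ by construction, and the weights are bounded by the standing assumption \eqref{eq:weight_bound}, so $\{x(n)\}$ stays in a fixed compact set $\mathcal{K}$. (ii) \emph{Continuity of the mean field}: on $\mathcal{K}$, $C(W,\Theta)$ is an expectation over the finitely supported $\Xi\sim q(\cdot\mid\Theta)$ and the data distribution of $-\log p(y\mid x,W,\Xi)$, which for continuously differentiable activations with $a^l(0)=0$ is a smooth function of $(W,\Theta)$; the moment bounds \eqref{eq:data_moments} guarantee that this expectation and its gradient are finite and continuous, so $\bar g$ is continuous on $\mathcal{K}$. (iii) \emph{Noise bound}: one shows $\E[\,\norm{\delta M(n)}^2\mid\mathcal{F}_n\,]\le K<\infty$ uniformly on $\mathcal{K}$. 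Here the mini-batch averages in \eqref{eq:grad_mL}--\eqref{eq:grad_m} are sums of terms of the form $\bar\delta_i^l(z_i^l)^\top$, which are at worst quadratic in the data through the forward and backward passes; with weights confined to $\mathcal{K}$ and the data possessing bounded moments up to order four for $x$ and order two for $y$ as in \eqref{eq:data_moments}, these second moments are finite, and the Bernoulli sampling of $\hat\Xi$ adds only bounded variance since it has finite support. Combined with $\sum a(n)^2<\infty$ from \eqref{eq:RM}, a standard martingale argument gives that $\sum_n a(n)\delta M(n)$ converges almost surely.

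Given (i)--(iii) together with $\sum a(n)=\infty$, the standard result of \cite{kushner2003stochastic} applies: the piecewise-constant interpolation of $\{x(n)\}$ under the time change $t_n=\sum_{k<n}a(k)$ forms an asymptotically equicontinuous family whose limits solve the projected ODE \eqref{eq:ODE_all_weights_unb}, and $\{x(n)\}$ converges almost surely to a connected invariant set of that ODE contained in $\mathcal{H}$. Finally I would invoke the identification already carried out in this section --- the comparison of \eqref{eq:ODE_stationary}, \eqref{eq:FONC_full} and \eqref{eq:mus} --- to conclude that this invariant set is precisely a stationary set of the optimization problem \eqref{eq:L_train_obj}, which is the assertion.

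I expect the main obstacle to be steps (ii)--(iii): the uniform-in-$\mathcal{K}$ control of the first two moments of the stochastic gradients under \eqref{eq:data_moments}. One has to propagate the data moments through all $L$ layers of the residual network \eqref{eq:struct_res}, tracking how bounded-derivative activations and bounded weights keep the forward signals $z^l$ and backpropagated signals $\bar\delta^l,\delta^l$ in $L^2$ --- indeed in $L^4$ for the products appearing in the weight gradients --- and verifying continuity of the resulting expectations in $(W,\Theta)$. This is exactly the content deferred to Section~\ref{app:assumption_verification}; once these assumptions are established, the ODE-method machinery of \cite{kushner2003stochastic} is essentially a black box.
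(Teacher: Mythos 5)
Your proposal is correct and follows essentially the same route as the paper: the paper's proof consists precisely of casting \eqref{eq:DE_W}--\eqref{eq:DE_th} as a projected stochastic approximation and verifying the hypotheses of Theorem~2.1 of \cite{kushner2003stochastic} (unbiasedness so that $\beta(n)\equiv 0$, uniform second-moment bounds on the gradient noise obtained by propagating the moment conditions \eqref{eq:data_moments} through the bounded-weight network, continuity of the mean field under the smooth-activation assumption, and the Robbins--Monro conditions), which is exactly your plan and your items (i)--(iii). The moment-propagation work you correctly identify as the main obstacle is carried out in \ref{app:assumption_verification} by invoking bounds from Appendix~B of \cite{GUENTER2024_robust}.
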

\begin{proof}
	The proof of Theorem~\ref{thm:Alg_conv} is based on well-established stochastic approximation results in \cite[Theorem 2.1, p.127]{kushner2003stochastic}.  We verify the required assumptions of Theorem 2.1 in \cite{kushner2003stochastic}  for the learning algorithm \eqref{eq:DE_W}-\eqref{eq:DE_th} and \eqref{eq:RM} in \ref{app:assumption_verification}.
\end{proof}
We remark that the assumption on the boundedness of the weights is reasonable since the $\mathcal{L}_2$ weight regularization terms in objective \eqref{eq:L_train_obj} should keep the norm of the weights from becoming excessively large. However, this assumption can be removed by using projection also on the weights at the expense of a more complicated argument.

\begin{theorem}\label{thm:conv_min}
	Consider the same setup and assumptions of Theorem~\eqref{thm:Alg_conv}. In addition, assume that the noise introduced to the gradient estimates in \eqref{eq:DE_W}-\eqref{eq:DE_th} by the stochastic gradient descent process  is isotropic with nonzero variance for all $n$. Then, $x(n)$ converges almost surely  to a solution (local minimum) of the minimization problem \eqref{eq:L_train_obj}, representing a deterministic Neural Network.
\end{theorem}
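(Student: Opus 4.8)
The plan is to bootstrap from Theorem~\ref{thm:Alg_conv}, which already gives almost-sure convergence of $x(n)$ to the stationary set of the mean-field ODE~\eqref{eq:ODE_all_weights_unb}, together with the identification established just above it (comparing \eqref{eq:hl} with \eqref{eq:mus}) of this stationary set with the KKT points of~\eqref{eq:L_train_obj}. It then suffices to exclude, with probability one, convergence to those KKT points that fail to be local minima. Since by Section~\ref{sec:characterization_solutions} and the ``Case~1'' analysis of Section~\ref{sec:KKT} every local minimum of~\eqref{eq:L_train_obj} has all $\theta^l\in\{0,1\}$, ruling out the non-minimal stationary points simultaneously yields both assertions of the theorem: optimality of the limit and the fact that it describes a deterministic network.

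The central step is to recognize every non-minimal KKT point as a \emph{linearly unstable} equilibrium of~\eqref{eq:ODE_all_weights_unb}. At a point with some $0<\theta^l<1$ the constraint $0\le\theta^l\le1$ is inactive in a neighborhood, so there $h^l\equiv0$ and the ODE reduces to the plain gradient flow $\dot x=-\nabla L(x)$; the ``Case~1'' computation in Section~\ref{sec:KKT} produced a strictly negative $2\times2$ principal minor of the Hessian of $L$, so $\nabla^2 L$ is indefinite and $-\nabla^2 L$ has an eigenvalue with strictly positive real part. The same holds at a ``Case~3'' point ($\theta^l=1$) at which the matrix in condition O1) has a strictly negative eigenvalue: the active constraint $\theta^l=1$ leaves the weight block free, and along the corresponding direction the linearized dynamics are exponentially expanding. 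Collecting these facts, the only stationary points of~\eqref{eq:ODE_all_weights_unb} that are \emph{not} linearly unstable are the ``Case~2'' points (always strict local minima, with weight-block Hessian $\lambda I\succ0$) and the ``Case~3'' points satisfying the strict second-order sufficient condition; all of these are local minima of~\eqref{eq:L_train_obj} and all describe deterministic networks.

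I would then invoke the classical non-convergence results for stochastic approximation recursions: a recursion of the form~\eqref{eq:DE_W}--\eqref{eq:DE_th} whose driving noise is, at every step, non-degenerate in all directions cannot converge to a linearly unstable equilibrium (in the spirit of the results of Pemantle and of Brandi\`ere and Duflo; see also \cite{kushner2003stochastic}). The added hypothesis of Theorem~\ref{thm:conv_min} — gradient-estimation noise that is isotropic with nonzero variance for all $n$ — is precisely what supplies the required excitation in the (possibly constraint-tangent) unstable subspace. Since the unstable equilibria are isolated under the non-degeneracy understanding adopted above, hence at most countably many, summing the vanishing escape probabilities over them shows that almost surely $x(n)$ does not converge to any unstable equilibrium. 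Combined with Theorem~\ref{thm:Alg_conv}, the limit is almost surely a linearly stable equilibrium, i.e., a local minimum of~\eqref{eq:L_train_obj}, and by Section~\ref{sec:KKT} it has every $\theta^l\in\{0,1\}$, i.e., it represents a deterministic neural network.

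The step I expect to be the \textbf{main obstacle} is the careful application of the non-convergence-to-unstable-points machinery in the present \emph{projected} setting: one must verify that near each unstable equilibrium the projection is either inactive (interior Case~1 points) or inactive exactly on the subspace where instability occurs (the free weight block in Case~3), so that the relevant linearization is genuinely hyperbolic with an unstable direction along which the isotropic noise remains exciting after projection. A secondary difficulty is the treatment of \emph{degenerate} stationary points, where the linearization is only neutrally stable and the linear theory is silent; there I would either strengthen the assumptions to exclude such points (as is implicitly done in Section~\ref{sec:KKT} when passing to the strict second-order condition) or supplement the argument with a center-manifold / Lyapunov-type refinement.
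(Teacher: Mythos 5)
Your proposal follows essentially the same route as the paper's proof: invoke Theorem~\ref{thm:Alg_conv} for convergence to the stationary set, use the identification of ODE stationary points with KKT points, note from the Case~1 analysis that any point with $0<\theta^{l,\star}<1$ is a strict saddle (Hessian with a negative eigenvalue), and then apply the avoidance-of-unstable-points machinery for stochastic approximation (the paper cites Theorem~8.1 of \cite{kushner2003stochastic}) under the isotropic-noise hypothesis to conclude convergence to a local minimum, hence a deterministic network. Your additional remarks on the projected setting and degenerate stationary points are careful refinements of the same argument rather than a different approach.
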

\begin{proof}
	Theorem \ref{thm:Alg_conv} establishes convergence of the algorithm to a stationary set of the ODE in $\mathcal{H}$. We have already shown that the stationary points of the ODE match those of the optimization problem.
	As a result, if the ODE converges to a stationary set, it converges to a stationary set of the objective \eqref{eq:L_train_obj}.
	For any stationary set with a $0<\theta^{l,\star}<1$, we have shown in Section~\ref{sec:characterization_solutions}, Case~1 that the Hessian of the optimization problem \eqref{eq:L_train_obj} has at least one negative eigenvalue, i.e., it is a strict saddle point or a local maximum.
	Then, application of Theorem 8.1 in \cite[p.159]{kushner2003stochastic} shows that under the stated noise assumption in Theorem~\ref{thm:conv_min}, the algorithm will escape any such stationary point and, therefore, cannot converge to any point with $0<\theta^{l,\star}<1$. The same reasoning also shows that the descent process will escape local maxima with $\theta^{l,\star}=0,\rm{\ or\ }1$.
	As a result, the algorithm \eqref{eq:DE_W}-\eqref{eq:DE_th} will converge to a (local) minimum of \eqref{eq:L_train_obj} at which for all layers $l$ it is either $\theta^{l,\star}=0$ or $1$, meaning that all Bernoulli parameters of the Neural Network \eqref{eq:struct_res} are either $0$ or $1$, leading to a deterministic network.	
\end{proof}

It has been established that perturbed gradient descent escapes saddle points efficiently \cite{Jin_escape2017}. Similarly, it has been shown that in machine learning tasks, such as training a DNN, Stochastic Gradient Descent is capable of escaping saddle points and maxima without requiring any additional perturbation \cite{Daneshmand1_escape2018}. This implies that sufficiently exciting noise is introduced to the gradient estimates automatically and the noise assumption in Theorem~\ref{thm:conv_min} should be typically satisfied.

\subsection{Convergence of Individual Layers Leads to Concurrent Training and Pruning.}
In this section, we analyze individual layers of the network in the ODE system \eqref{eq:ODE_all_weights_unb}, and for clarity dependence on layer $l$ in our notation is omitted. Following the notation of Section~\ref{sec:KKT}, we denote the fan-out weights of the nonlinear path in layer $l$ by $w$ and the parameter of its Bernoulli RV by $\theta$. Then, the ODE system \eqref{eq:ODE_all_weights_unb} reduces for $w$ and $\theta$ as follows:
\begin{align}\label{eq:Sys_GrDes}
\begin{split}
\dot w &= -\frac{\partial L}{\partial w}=-\theta \nabla A(w)-\lambda w\\   	
\dot \theta  &= -\frac{\partial L}{\partial \theta}-h=-\underbrace{[C_1(w)-C_0-\log\gamma]}_{A(w)}-h.
\end{split}
\end{align}
with $h$ from \eqref{eq:hl}.

The following result establishes that $\{w=0,\, \theta=0\}$ is an asymptotically stable equilibrium point of the dynamical system \eqref{eq:Sys_GrDes} (see also Section~\ref{sec:KKT}, Case~2) and provides a subset of its region of attraction (RoA). Most importantly, the RoA is independent of the remaining layers of the network and invoking stochastic approximation theorems, the result can be used to justify the pruning of layers from the NN  during the early stages of training, even when other layers may have not converged yet.

\begin{theorem}\label{thm:stability}
	Consider the dynamical system \eqref{eq:Sys_GrDes} with $\lambda>0$ and $\log\gamma<0$.	
	Assume that all weights of the network remain bounded, i.e. \eqref{eq:weight_bound} holds.
	Then, $\{w=0,\theta=0\}$ is a locally asymptotically stable equilibrium of the system \eqref{eq:Sys_GrDes}.
	Moreover,
	\begin{align}\label{eq:RoA}
	\mathcal{A}_0= \left\{w \in \mathbb{R}^q,\, 0\leq \theta \leq 1 \quad \middle|\quad \frac{1}{2}\norm{w}^2 + \frac{1}{2}\theta^2  < \frac{1}{2}\min\left\{\left(\frac{-\log\gamma}{\kappa+\eta}\right)^2, 1\right\} \right\}
	\end{align}
	belongs to its region of attraction.
\end{theorem}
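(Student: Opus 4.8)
The plan is to construct a Lyapunov function for the reduced dynamical system \eqref{eq:Sys_GrDes} near $\{w=0,\theta=0\}$ and show it strictly decreases along trajectories inside $\mathcal{A}_0$, thereby certifying local asymptotic stability and identifying $\mathcal{A}_0$ as a subset of the region of attraction. The natural candidate is $V(w,\theta)=\tfrac12\norm{w}^2+\tfrac12\theta^2$, which appears directly in the definition \eqref{eq:RoA}. Computing $\dot V = w^\top\dot w + \theta\dot\theta$ and substituting the right-hand sides from \eqref{eq:Sys_GrDes} gives
\begin{align*}
\dot V = -\theta\, w^\top\nabla A(w) - \lambda\norm{w}^2 - \theta\bigl(A(w)+h\bigr).
\end{align*}
On the interior region $0<\theta<1$ we have $h=0$, and the key observation is that near $w=0$, $A(w)=C_1(w)-C_0-\log\gamma$ with $C_1(0)=C_0$, so $A(0)=-\log\gamma>0$; by continuity $A(w)\geq -\tfrac12\log\gamma>0$ for $\norm{w}$ small. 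The term $-\theta A(w)$ is then strictly negative for $\theta>0$, and together with $-\lambda\norm{w}^2<0$ this should dominate the cross term $-\theta\, w^\top\nabla A(w)$, which is higher order in $\norm{w}$.

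First I would make the Lipschitz/boundedness structure explicit: under the boundedness assumption \eqref{eq:weight_bound} on all network weights and the smoothness of the activations, $C_1(w)$ is $C^2$ with gradient and Hessian bounded on the relevant compact set, so there exist constants $\kappa$ (a Lipschitz-type bound governing how fast $A(w)$ can drop below $-\log\gamma$, i.e. $|A(w)-A(0)|\le \kappa\norm{w}$ type estimate, or a bound on $\norm{\nabla A}$) and $\eta$ (bounding the cross-term contribution $|w^\top\nabla A(w)|\le \eta\norm{w}^2$ or similar), which are precisely the constants appearing in \eqref{eq:RoA}. With these in hand, inside $\mathcal{A}_0$ one has $\norm{w}<\tfrac{-\log\gamma}{\kappa+\eta}$ (this is what the $\min$ with $1$ and the factor $\tfrac12$ in \eqref{eq:RoA} encode, also ensuring $\theta\le 1$ stays consistent with the constraint), which forces $A(w)>0$ and makes $\dot V<0$ for $(w,\theta)\neq(0,0)$. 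Then I would handle the boundary cases of the projected dynamics: when $\theta=0$, $h=-[A(w)]_+$ so $\dot\theta = -A(w)+[A(w)]_+ = -[A(w)]_-=0$ whenever $A(w)>0$, so $\theta$ stays at $0$ and $\dot V = -\lambda\norm{w}^2\le 0$; when $\theta=1$ the region $\mathcal{A}_0$ (because of the $\tfrac12\theta^2<\tfrac12$ constraint) excludes this, so no boundary issue arises there. Finally, invariance of $\mathcal{A}_0$ follows because $V$ is strictly decreasing on it, and LaSalle's invariance principle (or a direct argument) upgrades $\dot V\le 0$ with $\dot V=0$ only at the origin to asymptotic convergence $\{w\to 0,\theta\to 0\}$.

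The main obstacle I anticipate is pinning down the constants $\kappa$ and $\eta$ precisely enough that the claimed radius $\tfrac{-\log\gamma}{\kappa+\eta}$ is exactly right, rather than merely "some neighborhood." This requires a careful estimate of $|C_1(w)-C_0|$ and $\norm{\nabla C_1(w)}$ in terms of $\norm{w}$, tracing through the network structure \eqref{eq:struct_res}: $C_1(w)$ involves $w=w^l$ only through the term $W_2^l a^l(W_1^l z^l)$, so its gradient in $w$ is controlled by $\norm{a^l(W_1^l z^l)}$, which in turn is bounded via $a^l(0)=0$, the bounded derivative of $a^l$, the weight bound \eqref{eq:weight_bound}, and the data moment bounds \eqref{eq:data_moments}. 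A secondary subtlety is that $\dot V\le 0$ on $\mathcal{A}_0$ must be strict away from the origin to conclude asymptotic (not just Lyapunov) stability; the $-\lambda\norm{w}^2$ term covers the $w$-direction, and strictness in $\theta$ comes from $-\theta A(w)$ with $A(w)$ bounded below by a positive constant — so the degenerate direction where $w=0$ but $\theta>0$ is still handled. I would also verify that trajectories starting in $\mathcal{A}_0$ cannot leave it through the $\theta=1$ face, which is immediate since $V<\tfrac12$ there forces $\theta<1$, and the projection only activates on the $\theta=0$ face where we showed $\dot\theta=0$.
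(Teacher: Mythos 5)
Your proposal is correct and follows essentially the same route as the paper: the same Lyapunov candidate $\tfrac12\norm{w}^2+\tfrac12\theta^2$, the same Lie-derivative computation with the projection term $h$ handled on the $\theta=0$ face, and the same role for the constants $\kappa,\eta$ (the paper imports the first-order bounds $|w^\top\nabla A(w)|\le\eta\norm{w}$ and $|C_1-C_0|\le\kappa\norm{w}$ from Appendix~A of \cite{GUENTER2024_robust} rather than re-deriving them, which resolves the "main obstacle" you flag and, since $\norm{w}<1$ on $\mathcal{A}_0$, also subsumes your tentative $\eta\norm{w}^2$ variant). The only cosmetic difference is that you first argue $A(w)>0$ near $w=0$ by continuity before quantifying it, whereas the paper goes directly to the bound $(\kappa+\eta)\norm{w}+\log\gamma<0$.
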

\begin{proof}
	$\{w=0,\theta=0\}$ is an equilibrium point of the dynamics of $w$ and $\theta$ in \eqref{eq:Sys_GrDes} since for $w=0$ it is $C_1-C_0=0$ and then $\log\gamma<0$ would lead to $\theta$ leaving the feasible set $[0,1]$. Therefore, in such a case it is $h=\log\gamma$ and $\dot \theta=0$.
	To establish its local asymptotic stability, we consider the Lyapunov candidate function
	\begin{align*}
	\Lambda(w,\theta) =\frac{1}{2}\norm{w}^2 + \frac{1}{2}\theta^2,
	\end{align*}
	which satisfies $\Lambda>0$ for $\{w,\theta\}\not=\{0,0\}$ and show that $\dot \Lambda<0$ in the region $\mathcal{A}_0$, which is a level set of $\Lambda$.
	We calculate the Lie-Derivative:
	\begin{align*}
	\begin{split}
	\dot \Lambda &=w^\top \dot w  + \theta\dot \theta\\
	&= -\lambda w^\top w -\theta w^\top \nabla A(w) - \theta\left(C_1-C_0\right) + \theta\log\gamma -\theta h.
	\end{split}
	\end{align*}
	Given assumption \eqref{eq:weight_bound}, i.e., all network weights remain bounded, it is shown in \cite{GUENTER2024_robust}, Appendix A, that for standard feed-forward NNs the following bounds hold:
	\begin{align}\label{eq:eta_kappa_bnd}
	|w^T\nabla A(w)| \leq \eta\norm{w} \quad \text{and} \quad |C_1-C_0| \leq \kappa \norm{w}
	\end{align}
	with constants $\eta,\kappa <\infty$ independent of the remaining weights of the network. These bounds are trivially extended to the residual NNs considered in this work, since the skip connection can be thought of a part of the nonlinear layer path but with an identity activation function.
	Then, using the bounds in \eqref{eq:eta_kappa_bnd} the Lie-Derivative is bounded as follows:
	\begin{align*}
	\begin{split}
	\dot \Lambda &\leq -\lambda \norm{w}^2 + \eta\theta\norm{w} + \theta\kappa\norm{w} + \theta\log\gamma -\theta h\\
	&=-\lambda \norm{w}^2  + \theta\left((\eta+\kappa)\norm{w} + \log\gamma - h\right).
	\end{split}
	\end{align*}
	Further if $0 <\theta < 1$, it is $h=0$ and from
	\begin{align}\label{eq:Lneg}
	´\left(\kappa+\eta\right) \norm{w} + \log\gamma < 0 \quad \Leftrightarrow \quad \norm{w} < \frac{-\log\gamma}{\kappa+\eta}
	\end{align}
	it follows that $\dot \Lambda<0$.
	From \eqref{eq:hl} and for $\theta=0$,  if $C_1-C_0-\log\gamma\leq 0$ then  $h=0$ and the case above applies or otherwise $h=-(C_1-C_0-\log\gamma)<0$ and the Lie-Derivative becomes.
	\begin{align*}
	\dot \Lambda  = -\lambda \norm{w}^2 - \theta w^T \nabla A(w) =  -\lambda \norm{w}^2 < 0.
	\end{align*}
	Condition (\ref{eq:Lneg}) is satisfied in the region $\mathcal{A}_0$, which is a level set of $\Lambda$, and thus $\Lambda$ is a local Lyapunov function for the system \eqref{eq:Sys_GrDes}. In addition, $\mathcal{A}_0$ is such that $\theta < 1$. In summary, the analyzed equilibrium point is locally asymptotically stable with region of attraction encompassing $\mathcal{A}_0$.
\end{proof}

We remark that under the assumptions of Theorem~\ref{thm:Alg_conv},  if $(w(n),\ \theta(n))$ for layer $l$ generated by the stochastic gradient descent algorithm \eqref{eq:DE}  visits the RoA established in Theorem~\ref{thm:stability} infinitely often with probability $p$, then $(w(n),\ \theta(n))$ converges to the equilibrium $(w=0,\ \theta=0)$ of the ODE \eqref{eq:Sys_GrDes} with probability at least $p$.  This result follows from  Theorem~2.1 in \cite{kushner2003stochastic} and although it cannot be easily verified, it does provide justification for early pruning of layers. More specifically, our empirical evidence with Algorithm \ref{alg:learning_algo} and the simulations presented in Section~\ref{sec:simulations} suggest that the actual RoA of the equilibrium point $(w=0,\theta=0)$ is much larger than the one given by \eqref{eq:RoA}; furthermore, with typical choices of the step size $a(n)$, we have not observed any $\theta$ parameter recovering from a value of less than $\theta_{tol}=0.01$ and not converging eventually to $0$. Therefore, a decreasing $\theta$ below a value of $0.01$ reliably signals the convergence of the nonlinear path of a layer to its dead equilibrium and its immediate removal.

\section{Simulation Experiments}\label{sec:simulations}
In this Section, we evaluate the proposed simultaneous learning and layer pruning algorithm on several datasets such as the MNIST dataset \cite{lecun-mnisthandwrittendigit-2010}, the CIFAR-10 and CIFAR-100 datasets \cite{Krizhevsky_09} and the ImageNet dataset \cite{imagenet_deng2009}. Starting from different deep residual network architectures or by adding residual paths into existing architectures like, for example, VGG16 \cite{simonyan_deep_2015} to fit our framework, the goal is to learn the appropriate number of layers together with the network's weights and obtain significantly smaller networks having performance on par with baseline performance of the trained larger networks.
Each experiment consists of training on the full training dataset and evaluating the found models on the test dataset. We do not cross-validate during training to select the final network. If possible, we evaluate the robustness of our method, i.e., its sensitivity with respect to weight initialization, by reporting the mean and standard deviation of the results over 10 runs.
The training parameters vary for the different datasets/architectures and are given in the following subsections. The networks resulting after the last epoch of training are saved and evaluated in terms of their structure, accuracy and parameter pruning ratio as well as floating point operations (FLOPS) pruning ratio (fPR). The parameter pruning ratio (pPR) is defined as the ratio of pruned weights over the number of total weights in the starting architecture. Similarly, the FLOPS pruning ratio is the percentage of FLOPS saved by the network found by the algorithm.
All simulations were run in MATLAB\textsuperscript{\tiny\textregistered} or Python\textsuperscript{\tiny\textregistered} using several TensorFlow\texttrademark/Keras libraries.

By neglecting the computations involving activation functions as a small fraction of the total computational load, we can estimate the computational load of the baseline network per iteration by {$\nu\cdot\sum_{l=1}^{L} \nu^l n^{l}n^{l+1}$} where $n^l$ is the number of features/units in the $l$th layer and $\nu$ is proportionality constant; also for a fully connected layer $\nu^l=1$  and for a convolutional layer $\nu^l = d_F^2\cdot d_W \cdot d_H$, where $d_F$ is the size of a square filter (kernel) matrix and $d_W \times d_H$ is the dimension of the resulting 2-D feature map in the $l$th layer.
For the dropout networks used in our algorithm, the expected computational load per iteration is upper-bounded by
\begin{align}\label{eq:comp_load}
\nu\cdot\sum_{l \in I} \nu^l n^{l}n^{l+1},
\end{align}
where $I$ is the set of indices of surviving layers. This  set will decrease as we train and prune the network and hence the computational load per iteration decreases. We use \eqref{eq:comp_load} to calculate how much computational load is saved in various experiments with our method.

\subsection{MNIST Experiments}
The MNIST dataset was designed for character recognition of handwritten digits (0-9) and consists of $60,000$ 28x28 grayscale images for training and an additional $10,000$ alike images for evaluation. The following hyper- and training parameters are used: Dataset size $N=60,000$, Mini-batch size $B=64$, weight $\mathcal{L}_2$-Regularization parameter $\lambda=1$.
We use a $\log\gamma=-200$ for experiments with fully connected NNs and $\log\gamma=-50$ for experiments with convolutional NNs.
All NNs are trained using the Adam optimizer \cite{kingma_adam:_2014} with a learning rate of $\expnumber{1}{-3}$ for 50 epochs and follow-up with a fine-tuning phase by training the pruned network for an additional 10 epochs with a learning rate of $\expnumber{1}{-4}$. Prior to the fine-tuning phase, all $\theta$ values less than $\expnumber{1}{-3}$ are set to $0$ and all other to $1$, thus specifying the final deterministic network architecture.

First, we consider  fully-connected NN architectures of $10$, $20$ and $50$ hidden layers, all with the same width of $100$ units each and therefore allowing for identity skip connections. The number of parameters in these networks are $170410$, $271410$ and $574410$, respectively. We note that the input layer holds $79,285$ weights and we do not prune this layer as it is needed to reduce the dimensionality of the input to a size of $100$.
Table~\ref{tab:archs_FCNN} summarizes the results.  For the $10-$ and $20$-layer networks, our algorithm prunes the network to only a two-layer network in almost all of the $10$ different initializations. This means, that besides the first layer, always only one other layer is kept. We note that this layer was 9 out of 10 times the second layer, independent of the starting number of layers. In the 10th run, the layer kept in the network was the third layer. The test accuracy of all resulting networks is consistently around $98.27\%$ and compares to the accuracy obtained with LeNet300-100, a 2 hidden-layer network with $300$ and $100$ units, of $98.46 \pm 0.09\%$. This network consists of over $260,000$ parameters compared to about $90,000$ parameters left in the architectures found by our algorithm.

\begin{table}[h!]
	\centering
	\begin{tabular}{lclclc}
		Number of Layers &  Layers after Training & Test Accuracy [\%]& pPR [\%] \\
		\hline
		\textbf{Fully-Connected NNs}\\
		\hline
		$10$ & $2\pm0$& $98.27\pm 0.13$& $47.42\pm 0$\\
		$20$ & $2\pm0$ & $98.22 \pm 0.07$& $66.98\pm 0$\\
		$50$ & $2.17\pm0.40$& $98.25 \pm 0.13$& $84.11\pm 0.72 $\\
		\hline
		\textbf{Convolutional NNs}\\
		\hline
		$10$ & $2.20\pm 0.42$& $99.23\pm 0.06$& $32.62\pm 3.68$\\
		$20$ & $2.70\pm 0.82$ & $99.23 \pm 0.07$& $52.59\pm 4.13$\\
		$40$ & $3.00\pm 0.67$& $99.32 \pm 0.07$& $71.34\pm 2.32 $\\
		\hline
	\end{tabular}
	\centering
	\caption{Resulting architectures, test accuracy and pruning ratio for the networks obtained by the learning/pruning method starting from different sized initial fully-connected and convolutional deep networks on MNIST.\label{tab:archs_FCNN}}
\end{table}

Next, NNs with $10$, $20$ and $40$ convolutional layers with 5x5 filter matrices subject to pruning are considered. The first half of these layers are with 6 features while the second half are with 16 features. In every network, the convolutional layers are followed by two fully-connected layers of width $120$ and $84$. Again, identity skip connections are used and no residual connection is used when the number of features increases from 6 to 16 and also in the fully connected layers. In principle, these architectures are deeper versions of the LeNet5 architecture which consists only of two convolutional layers followed by the same fully connected layers as in our networks.
We do not prune the input layer as well as the fully connected layers. Table \ref{tab:archs_FCNN} shows the results. Our algorithm is able to prune the convolutional layers of the networks to only $2.20$ to $3$ layers remaining on average. Test accuracy is consistently at over $99.2\%$. LeNet5 achieves a baseline test accuracy of $99.20\pm 0.08 \%$. In case of the $40$-layer network the number of parameters is reduced from around $183,500$ to around $52,500$, of which around $40,000$ are in the fully-connected layers at the end of the network. Considering only the parameters of the convolutional layers, our algorithm is able to prune over $92\%$ of them.

In Figure~\ref{fig:CNN_inds}, we plot histograms of the layer indices corresponding to the layers still present in the network after applying our learning and pruning algorithm to the three starting architectures. Since we do not prune the first layer, it is always present in all of the 10 runs and its bar has a height of 10. The two fully connected layers at the end of each network are not shown in these graphs. Our algorithm favors keeping layers in the second half of the network, where each layer holds 16 features. As the initial network depth increases, the histograms become less sharp indicating that for the survival of a layer, its particular position within the second half of the network is not the most important factor, but rather a favorable initialization and how a layer evolves during training dictate whether it is pruned or not.

\begin{figure}[!htb]
	\centering
	\includegraphics[width=0.32\textwidth]{./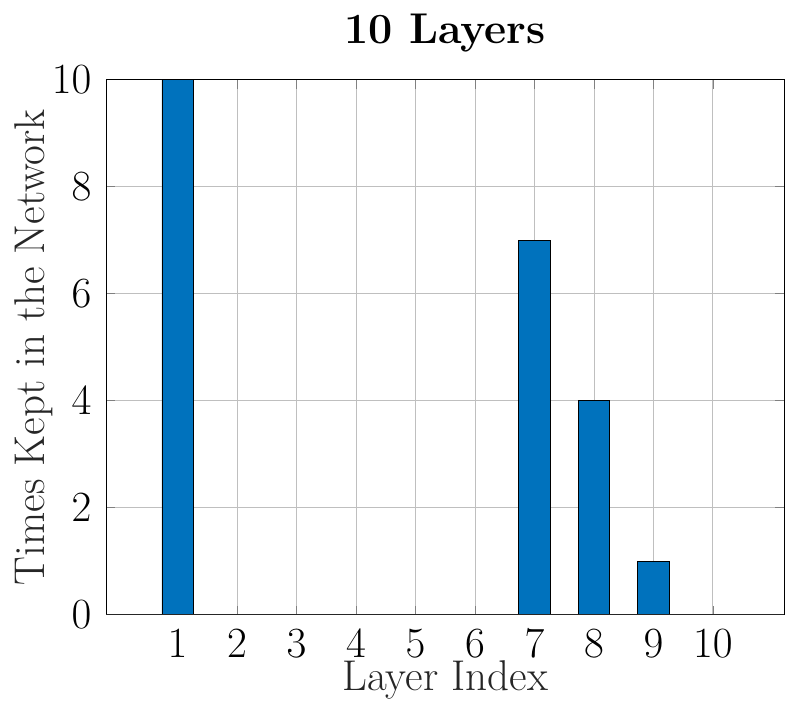}	
	\includegraphics[width=0.32\textwidth]{./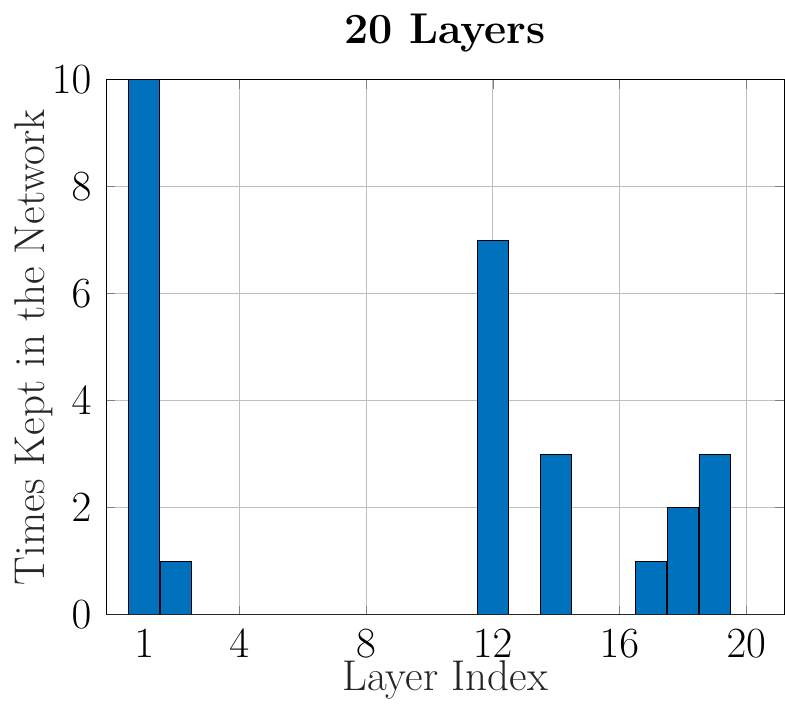}	
	\includegraphics[width=0.32\textwidth]{./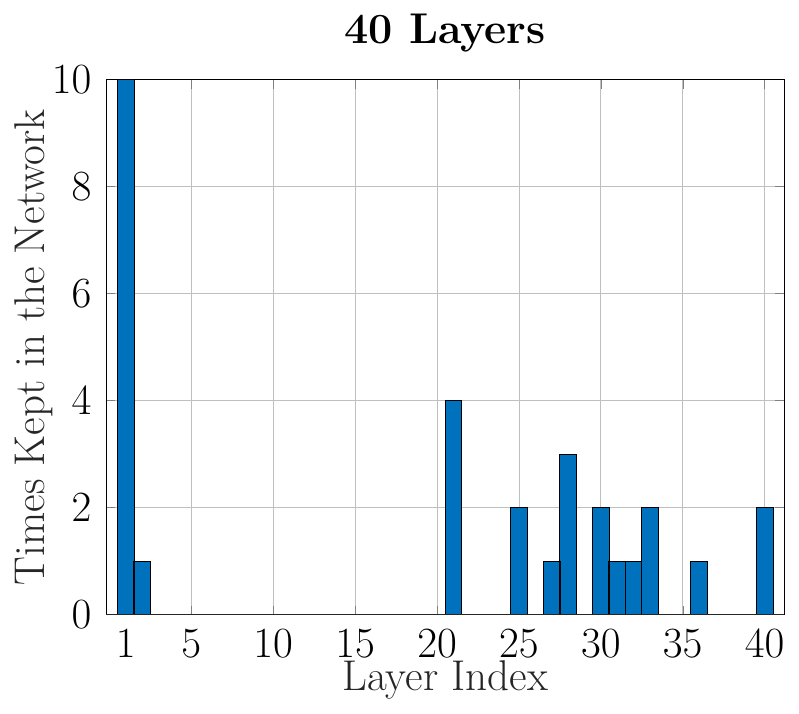}	
	\caption{Histograms of the layer indices that survived the pruning/training process when using our algorithm with the convolutional NN architectures on the MNIST dataset. }\label{fig:CNN_inds}
\end{figure}

\subsection{CIFAR-10/100 Experiments}
We evaluate our algorithm on the CIFAR-10 and CIFAR-100 dataset \cite{cifar10} on common architectures such as VGG16 \cite{simonyan_deep_2015} and ResNet \cite{ResNet_he_2016} as well as our network structure defined in eq. \eqref{eq:struct_res} and Table~\ref{tab:architectures}.
All $50,000$ training images of the CIFAR-10/100 datasets are used during training and the network is evaluated on the $10,000$ test images of the 10/100-class dataset.

\subsubsection{VGG16}
The VGG16 network structure \cite{simonyan_deep_2015}, consists of 13 convolutional and 2 or 3 dense layers, summing up to around $15$ million parameters.
To fit the VGG16 network \cite{li_pruning_2017} in our framework (see eq. \eqref{eq:struct_res}), we introduce identity skip-connections between each two convolutional layers according to Table \ref{tab:architectures}. If the number of filters increases by a factor of two, we simply stack a copy of the output of the previous layer onto itself to match the dimensions of the next layer. No skip connections are introduced for the two dense layers in the network, i.e., we do not aim to prune these layers. Experiments with this modified VGG16 network will be labeled ``Ours'' in the following.

The following training parameters and hyper-parameters are used: Dataset size $N=50,000$, Mini-batch size $B=128$, $\mathcal{L}_2$-Regularization parameter $\lambda= 150$. All $\theta$ parameters are initialized at $0.75$ and we choose $\log\gamma=-1200, -1800$. The networks are trained for 300 epochs in total.
The Adam optimizer \cite{kingma_adam:_2014} with constant learning rate of $\expnumber{1}{-4}$ is used to learn the $\theta$ parameters. To learn the networks weights, we use SGD  with momentum of $0.9$ and initial learning rate of $0.01$ and a cosine decay schedule. At epoch 210 all $\theta$ parameters less than 0.1 are set to 0 and all other to 1.

Table \ref{tab:archs_VGG16} shows our pruning results for the two choices of $\log\gamma$ and lists also the results of \cite{chen19_shallowing} and \cite{xu2020layer} using their (modified) VGG16 architecture for pruning on the same dataset.
Our method is able to prune around 73\% of the networks parameters with only a 0.2\% drop in test accuracy for the choice $\log\gamma=-1200$. However the FLOPS reduction is only around 33\%. For $\log\gamma=-1800$ the parameters are reduced by over 91\% and the FLOPS by over 63\%, yielding a computationally cheaper network than, for example, ResConv-Prune-B \cite{xu2020layer}  at the expense of a somewhat reduced test accuracy. In \cite{sparse_huang2018}, it is reported that only a reduction of 30\% of parameters can be achieved with minor loss of test accuracy using their method and a pruning ratio of about 66\% leads to decrease in test accuracy of over 2.5\%.

\begin{table}[h!]
	\centering
	\begin{tabular}{lclclclc}
		\hspace{-0.05cm}Method &  Test Acc. [\%]&Baseline [\%] &pPR[\%] &fPR[\%] \\
		\hline
		Ours, $\log\gamma=-1200$ & $93.86$& $94.05$ & $72.85$& $33.12$\\
		Ours, $\log\gamma=-1800$ & $92.74$& $94.05$ & $91.55$& $63.22$\\

		\hline
		\cite{chen19_shallowing}  & $93.40$&$93.50$ & $87.90$& $38.9$\\	

		\hline
		ResConv-Prune\cite{xu2020layer}-A   & $94.65$&$94.15$ & $63.0$& $21.2$\\
		ResConv-Prune\cite{xu2020layer}-B   & $93.45$&$94.15$ & $75.9$& $55.9$\\
		ResConv-Prune\cite{xu2020layer}-C  & $92.71$&$94.15$ & $92.1$& $73.3$\\
	\end{tabular}
	\centering
	\caption{Resulting architecture and test accuracy and pruning ratio of the resulting network of the learning/pruning method on the modified VGG16 architecture on the CIFAR-10 dataset.\label{tab:archs_VGG16}}
\end{table}

Figure \ref{fig:VGG16_param_load} shows on the left the number of parameters remaining in the network during training. Our method reduces the number of parameters within the first 60 epochs to around 4 and 1.3 million for $\log \gamma = -1200$ and $-1800$, respectively.
The plot on the right hand side in Figure \ref{fig:VGG16_param_load} shows the computational load over 300 training epochs and based on \eqref{eq:comp_load}.
Training the baseline VGG16 network \cite{simonyan_deep_2015}  for 300 epochs on the CIFAR-10 dataset is around $1.43$ and  $2.30$ times computationally more expensive than our algorithm for $\log\gamma=-1200$ and $\log\gamma=-1800$, respectively.

\begin{figure}[!htb]
	\centering
	\includegraphics[width=0.4\textwidth]{./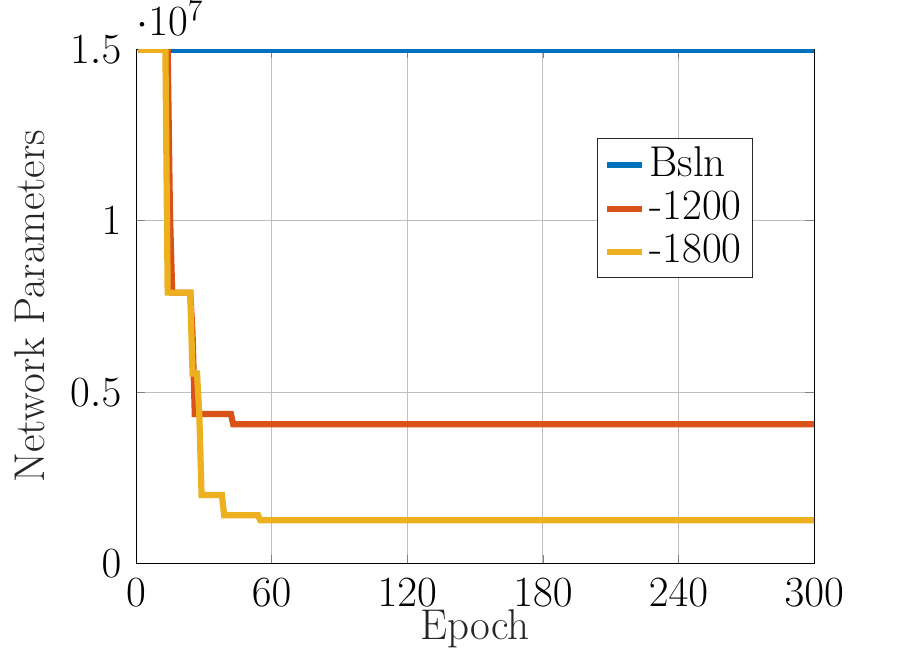}	
	\includegraphics[width=0.4\textwidth]{./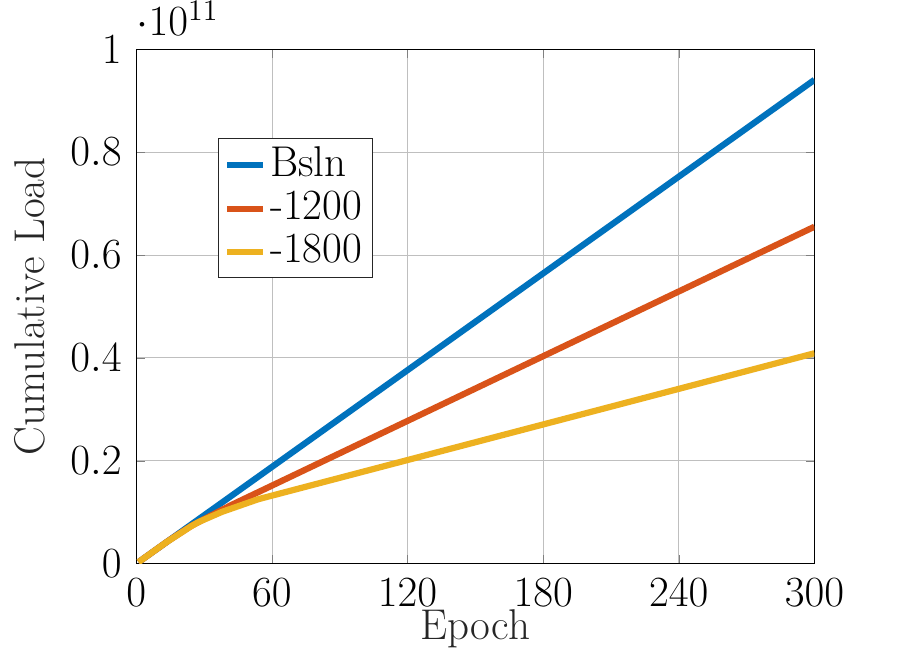}		
	\caption{Results for our version of VGG16 on CIFAR-10. Left: The total number of parameters in the network during training/pruning.
		Right: Cumulative training load of the network during training/pruning. }\label{fig:VGG16_param_load}
\end{figure}

\subsubsection{ResNet56 and ResNet110}
We evaluate our method on the ResNet56 and ResNet110 NNs, consisting of 56 and 110 layers, respectively, as well as on our similar $W$-$B$-$a$-$W$ architecture proposed in Table~\ref{tab:architectures} and  specified in \eqref{eq:struct_res}, on the CIFAR-10 and CIFAR-100 datasets.
All experiments use the following hyperparameters: Dataset size $N=50,000$, Mini-Batch size $B=128$, $\mathcal{L}_2$-Regularization parameter $\lambda=1.5$. All $\theta$ parameters are initialized at $0.75$.
The networks are trained for $182$ epochs.
During the training process $\log\gamma$ is scheduled to increase linearly from $0$ to its final value at epoch $105$ and constant afterwards. We test different final $\log\gamma$ values ranging from $-100$ to $-1200$.
The Adam optimizer \cite{kingma_adam:_2014} with constant learning rate of $\expnumber{1}{-4}$ is used to learn the $\theta$ parameters. To learn the networks weights, we use SGD with momentum of $0.9$ and initial learning rate of $0.1$ and a cosine decay schedule. At epoch 150, all $\theta$ parameters less than $0.1$ are set to 0 and all other to 1; the NNs are fine-tuned for an additional 32 epochs.
We report mean and standard deviation of our results as well as the best individual run directly below for comparison of our method to existing methods for layer pruning.

Table~\ref{tab:CIFAR10_ResNet56} shows the results of our method applied to our  $W$-$B$-$a$-$W$ architecture of equal size as ResNet56. Using $\log \gamma =-500$, our method is able to prune the  $W$-$B$-$a$-$W$ network to only 26 layers while achieving a test accuracy of $93.34\%$. This leads to a FLOPS reduction of $55.46\%$ and to an on-par result with LPSR-50\% \cite{LPSR_zhang} and DBP-A \cite{DBPwang2019}.
For the choice $\log \gamma= -250$, our method is able to reduce the FLOPS by $39.48\%$ while achieving an accuracy of $93.81\%$ and improves upon the result of LPSR-30\% \cite{LPSR_zhang} and ResConv-Prune-A \cite{xu2020layer} in both metrics. It is clear how the choice of $\log\gamma$ affects prediction performance and network size.

\begin{table}[h!]
	\centering
	\begin{tabular}{ccccccccc}
		Method & Test Acc. [\%] & Baseline [\%] &Layer Left & fPR[\%] & pPR[\%]\\
		
		\hline	
		 Ours, $\log\gamma=-250$& $93.66\pm0.23$  & $94.26\pm0.15$ & $34.4\pm2.1$ &  $38.73\pm 3.88$ & $28.24\pm 4.15$ \\
		Best Run 	& $93.81$  & $94.26\pm0.15$ & $34$ &  $39.48$ & $34.20$ \\
		
		 Ours, $\log\gamma=-500$& $93.10\pm0.17$  & $94.26\pm0.15$ & $26.2\pm1.5$ &  $54.52\pm2.73$ & $38.83\pm6.66$ \\
		Best Run & $93.34$  & $94.26\pm0.15$ & $26$ &  $55.46$ & $36.93$ \\
		
		 Ours, $\log\gamma=-800$& $92.64\pm0.45$  & $94.26\pm0.15$ & $21.6\pm2.1$ &  $62.89\pm3.87$ & $46.10\pm 4.94$ \\
		Best Run	& $92.94$  & $94.26\pm0.15$ & $22$ &  $62.04$ & $48.87$ \\
		
		 Ours, $\log\gamma=-1200$& $92.45\pm 0.43$  & $94.26\pm0.15$ & $19.4\pm1.9$ &  $66.93\pm3.57$ & $51.20\pm 6.14$ \\
		Best Run		& $92.53$  & $94.26\pm0.15$ & $18$ &  $69.56$ & $51.58$ \\			
			
		\hline
		\cite{xu2020layer}ResConv-Prune-A & 93.75  & 94.12 &- &  36.0 & 28.2\\
		\cite{xu2020layer}ResConv-Prune-B & 92.72  & 94.12 &- &  63.4 & 58.8\\
		\cite{xu2020layer}ResConv-Prune-C & 91.59  & 94.12 &- &  73.5 & 65.9\\
		\hline
		\cite{chen19_shallowing}  & 93.09  & 93.03 &- &  34.8 & 42.3\\

		\hline

		\cite{LPSR_zhang}LPSR-30\%& 93.70  & 93.21 &40 & 30.1 & 23.5\\
		\cite{LPSR_zhang}LPSR-50\%& 93.40  & 93.21 &28 &  52.68 & 44.89\\
		\cite{LPSR_zhang}LPSR-70\%& 92.34  & 93.21 &18 &  71.5 & 61.2\\
	
		\hline
		\cite{DBPwang2019}DBP-A& 93.39  & 93.72 &- &  53.41 & -\\
		\cite{DBPwang2019}DBP-B& 92.32  & 93.72 &- &  68.69 & -\\

	\end{tabular}
	\centering
	\caption{Resulting test accuracy, the number of layers in the network, FLOPS pruning ratio and parameter pruning ratio of the pruned networks found with our method applied to the  $W$-$B$-$a$-$W$ version of ResNet56 on the CIFAR-10 dataset in comparison to competing methods for layer pruning.} \label{tab:CIFAR10_ResNet56}
\end{table}

Table \ref{tab:CIFAR10_ResNet110} shows the results of training our version of ResNet110 on the CIFAR-10 dataset for $\log\gamma=-150, -250$ and $-800$.
In the first case, the resulting network achieves a test accuracy of $94.25\%$, which virtually equals the test accuracy of the baseline ResNet56 network on the same task. However, the pruned network resulting from ResNet110 consists of about 51 layers and a computational load of $116.4$ million FLOPS, which is significant lower when compared to about $125$ million FLOPS of the baseline ResNet56. Thus, our method has been able to identify a smaller structure that performs equally well as ResNet56 when it was given the flexibility of a larger initial network.

For $\log\gamma=-250$, our method achieves a test accuracy of $93.91\%$
while pruning the network's computations by $64.37\%$, which results in a considerably smaller network than ResConv-Prune-A \cite{xu2020layer} and DBP-A \cite{DBPwang2019}.
For $\log \gamma=-800$, $79.30\%$ of the network's computations and $70.47\%$ of the network's parameters were pruned, while maintaining a test accuracy of around $93\%$.

\begin{table}[h!]
	\centering
	\begin{tabular}{ccccccccc}
		Method & Test Acc. [\%] & Baseline [\%] &Layer Left & fPR[\%] & pPR[\%]\\
		
		\hline
		Ours, $\log\gamma=-150$& $94.25\pm0.19$  & $94.70\pm0.20$ & $51.2\pm3.2$ &  $53.97\pm 2.90$ & $44.35\pm 2.94$ \\
		Best Run 	& $94.34$  & $94.70\pm0.20$ & $46$ &  $58.78$ & $47.43$ \\

		Ours, $\log\gamma=-250$& $93.86\pm0.17$  & $94.70\pm0.20$ & $41.4\pm3.1$ &  $63.11\pm 3.02$ & $52.66\pm 2.81$ \\
		Best Run 	& $93.91$  & $94.70\pm0.20$ & $40$ &  $64.37$ & $53.86$ \\
		
		Ours, $\log\gamma=-800$& $92.83\pm0.14$  & $94.70\pm0.20$ & $24.0\pm1.3$ &  $79.44\pm1.26$ & $68.63\pm2.86$ \\
		Best Run & $93.02$  & $94.70\pm0.20$ & $24$ &  $79.30$ & $70.47$ \\
		
		\hline
		\cite{xu2020layer}ResConv-Prune-A &$94.01$ & $94.15$& - &  $23.7$ & $8.7$\\
		\cite{xu2020layer}ResConv-Prune-B & $93.79$& $94.15$& - &  $65.6$& $55.5$\\
		\cite{xu2020layer}ResConv-Prune-C & $93.67$& $94.15$& - &  $67.8$& $69.4$\\
		
		\hline
		\cite{DBPwang2019}DBP-A & $93.61$ & $93.97$ & - & $43.92$ & - \\
		\cite{DBPwang2019}DBP-B & $93.25$& $93.97$ & -  & $56.98$& -

	\end{tabular}
	\centering
	\caption{Resulting test accuracy, the number of layers in the network, FLOPS pruning ratio and parameter pruning ratio of the pruned networks found with our method applied to the  $W$-$B$-$a$-$W$ version of ResNet110 on the CIFAR-10 dataset in comparison to competing methods for layer pruning.} \label{tab:CIFAR10_ResNet110}
\end{table}

Figure~\ref{fig:R110_C10_param_load} shows on the left the total number of parameters remaining in the network and on the right the cumulative computational load to train the networks calculated using \eqref{eq:comp_load} over the 182 epochs for our ResNet110 experiments. Both plots show the mean over the 10 runs of each experiment.
From the left plot, it can be seen that most pruning is done within the first 60 epochs of training. Afterwards, the network is small and the cumulative computational load increases almost linearly at a lower rate when compared to the baseline network. Training the baseline ResNet110 network is 3.53, 2.07 and 1.66 times more expensive than simultaneously training and pruning the networks with our method  using $\log \gamma= -150, -200$ and $-800$, respectively.

\begin{figure}[!htb]
	\centering
	\includegraphics[width=0.4\textwidth]{./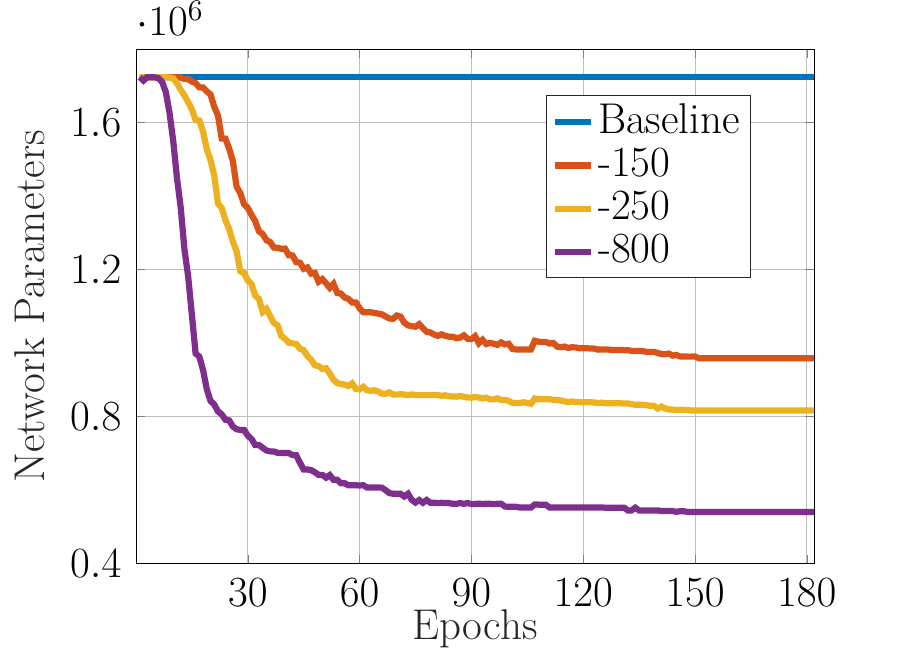}	
	\includegraphics[width=0.4\textwidth]{./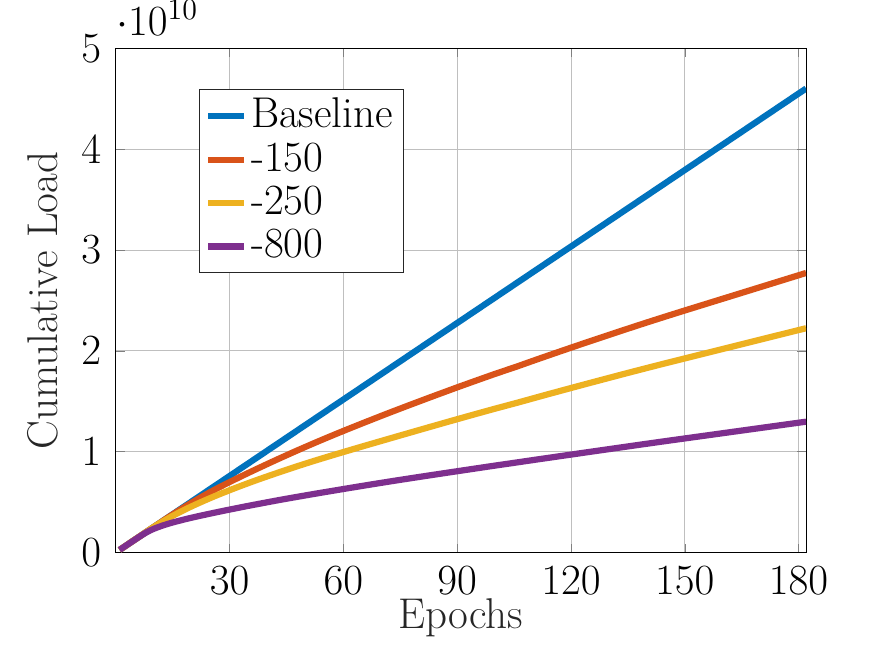}
	\caption{Results for our  $W$-$B$-$a$-$W$ version of ResNet110 on CIFAR-10. Left: The total number of parameters in the network during training/pruning.
		Right: Cumulative training load of the network during training/pruning. }\label{fig:R110_C10_param_load}
\end{figure}

Table~\ref{tab:CIFAR100_ResNet} shows the results on the CIFAR-100 dataset for ResNet56 and ResNet110. We noticed that for CIFAR-100 using a common value of $\log\gamma$ for all layers leads to the deeper and wider layers of the network always surviving. These layers, however, contain the most parameters and are attractive to prune. Therefore, we multiply $\log\gamma$ times $5$ for all layers with 64 filters and indicate such experiments with the additional label ``x5''. Our method is able to prune 12 layers from ResNet56 with only a 0.14\% loss in accuracy for $\log\gamma=-250$. For a more aggressive $\log\gamma=-1400$, our method yields a 24-layer network reducing the computational load by around 59\% and the parameters by 52\%, while still achieving a test accuracy of $70.45\%$.
ResNet110 is pruned by around 28\% in computational load with only a 0.16\% loss in accuracy when using $\log\gamma=-100$.
A more heavily pruned network is obtained by using $\log\gamma=-500$. This network is reduced to 5, 8 and 7 residual blocks of width 16, 32 and 64, respectively and is considerably smaller than the baseline ResNet56 network while performing over 1\% point better in test accuracy, at around 74\%.
Again, when given initially more flexibility by starting with a larger initial network architecture, our algorithm not only finds an architecture of suitable size, but also finds superior weights. Training the baseline ResNet110 network is 2.23 times as expensive as concurrently training and pruning our network. Training the baseline ResNet56 network is only about 10\% cheaper than obtaining the better network with our method starting from ResNet110.

\begin{table}[h!]
	\centering
	\begin{tabular}{ccccccccc}
		Method & Test Acc. [\%] & Baseline [\%] &Layer Left &MFLOPS& fPR[\%] & pPR[\%]\\
		
		\hline
		\textbf{ResNet56}\\
		\hline	
		
		 Ours, $\log\gamma=-250$	& $72.57$  & $72.71$ & $44$ & $97.18$  & $22.56$ & $6.53$ \\	
		 Ours, $\log\gamma=-1000$(x5)	& $72.04$  & $72.71$ & $30$ & $65.32$  & $47.94$ & $39.09$\\		
		 Ours, $\log\gamma=-1400$(x5)	& $70.45$  & $72.71$ & $24$ & $51.17$  & $59.22$ & $52.11$ \\				
		\hline
		\cite{chen19_shallowing}& $69.77$ & $70.01$ & - & - & $38.3$ & $36.1$\\
		\cite{LPSR_zhang}LPSR-50\%& $70.17$& $71.39$ & - & $60.2$ &$52.60$  & $37.21$ 	\vspace{0.5cm}\\	
		
		\textbf{ResNet110}\\
		\hline
		 Ours, $\log\gamma=-100$	& $75.34$  & $75.50$ & $80$ & $182.11$  & $27.99$ & $8.06$ \\
		 Ours, $\log\gamma=-100$(x5)	& $75.05$  & $75.50$ & $88$ & $202.17$  & $20.06$ & $19.56$ \\		
		 Ours, $\log\gamma=-500$(x5)	& $73.98$  & $75.50$ & $42$ & $93.64$  & $62.97$ & $60.29$ \\

	\end{tabular}
	\centering
	\caption{Resulting test accuracy, the number of layers in the network, FLOPS, FLOPS pruning ratio and parameter pruning ratio of the pruned networks found with our method applied to the  $W$-$B$-$a$-$W$ versions of ResNet56 and ResNet110 on the CIFAR-100 dataset.} \label{tab:CIFAR100_ResNet}
\end{table}

\subsection{ImageNet Experiments}
In this section, we evaluate the proposed algorithm on the ImageNet \cite{imagenet_deng2009} dataset consisting of a training set of about 1.28 million 224x224 colored images in 1000 categories and a validation set of 50,000 alike images. We employ our $W$-$B$-$a$-$W$ modification of the common ResNet50 residual deep NN architecture with around 25.6 million parameters and a computational load of around 4 billion FLOPS. All training images are used during training and the network is evaluated on all 50000 validation images using the single view classification accuracy.

The following training parameters are used: dataset size N=1,281,167, mini-batch size B=256, $\mathcal{L}_2$-Regularization parameter $\lambda=25$. All $\theta$ parameters are initialized at $0.75$. During the training process, $\log\gamma$ is scheduled to increase linearly from $0$ to its final value of $\log\gamma=-17,500$ at epoch $35$ and remain constant afterward.
The network is trained for $90$ epochs using the AdamW \cite{Loshchilov2017DecoupledWD} optimizer with a cosine decay learning rate schedule. The results from our approach are compared to existing methods in Table~\ref{tab:Imagenet_ResNet50}. Our method is able to prune the network to 17.94 million parameters and a computational load of only 2.8 billion FLOPS while achieving a test accuracy of $74.82\%$. This results falls in between ResConv-Prune-A and B from \cite{xu2020layer} in terms of both, computational load and test accuracy of the pruned network.
Our method achieves a slightly higher test accuracy with slightly higher computational load when compared to  LPSR-40\% \cite{LPSR_zhang} and DBP-A \cite{DBPwang2019}. Compared to \cite{sparse_huang2018}-B, our method finds a similar sized network with considerably higher test accuracy.

\begin{table}[h!]
	\centering
	\begin{tabular}{ccccccccc}
		Method & Test Acc. [\%] & Baseline [\%] & FLOPS (billion)& Parameter (million) \\

		\hline
		 Ours , $\log\gamma=-17,500$ & 74.82  & 76.04  & 2.80  & 17.94  \\
		
		\hline
		\cite{sparse_huang2018}-A & 75.44 & 76.12 & 3.47 & 25.30\\
		\cite{sparse_huang2018}-B & 74.18 & 76.12 & 2.82 & 18.60\\
		\hline
		\cite{xu2020layer}ResConv-Prune-A & 75.44  & 76.15  & 3.65 & 16.61 \\
		\cite{xu2020layer}ResConv-Prune-B & 73.88  & 76.15  & 2.17 & 13.63 \\
		
		\hline
		\cite{LPSR_zhang}LPSR-40\% & 74.75  & 76.13 & 2.58 & 17.34 \\
		
		\hline
		\cite{DBPwang2019}DBP-A& 74.74  & 76.15 &2.57 & - \\
		\cite{DBPwang2019}DBP-B& 72.44  & 76.15 &2.06 & - \\
		
	\end{tabular}
	\centering
	\caption{Resulting test accuracy, FLOPS and number of parameters of the pruned networks found with our method and our $W$-$B$-$a$-$W$ version of the ResNet50 architecture on the ImageNet dataset in comparison to competing methods for layer pruning.} \label{tab:Imagenet_ResNet50}
\end{table}

\section{Conclusions}\label{sec:conclusions}
Deep neural networks often require excessive computational requirements during training and inference. To address this issue, we have proposed a novel layer pruning algorithm that operates concurrently with the weight learning process. Our algorithm promptly removes nonessential layers during training, thus significantly reducing training time and allowing the training of very deep networks when transfer learning along with pruning methods that operate on fully trained networks are not possible. The proposed algorithm, based on Bayesian variational inference principles, introduces scalar Bernoulli random variables multiplying the outputs of blocks of layers, similar to layer-wise adaptive dropout.
Skip connections around such layer blocks ensure the flow of information through the network and are already present in common residual NN architectures or are introduced as a part of our method.
The optimal network structure is then found as the solution of a stochastic optimization problem over the network weights and parameters describing the variational Bernoulli distributions.
Automatic pruning is effected during the training phase when a variational parameter converges to 0 rendering the corresponding layer permanently inactive. To ensure that the pruning process is carried out in a robust manner with respect to the network's initialization and to prevent the premature pruning of layers, the ``flattening'' hyper-prior \cite{GUENTER2024_robust} is placed on the parameters of the Bernoulli random variables.

A key contribution of this work is a detailed analysis of the stochastic optimization problem showing that all optimal solutions are deterministic networks, i.e., with Bernoulli parameters at either 0 or 1. This property follows as another consequence of adopting the ``flattening'' hyper-prior. In addition, our analysis gives new insight into how the choice of the hyper-prior parameter measures the usefulness of a surviving layer.
Further, we present a stochastic gradient descent algorithm and prove its convergence properties using stochastic approximation results. By considering the implied ODE system, we tie our algorithm to the solutions of the optimization problem. More specifically,
in Theorem~\ref{thm:Alg_conv}, we show that the algorithm converges to the stationary sets of the ODE that coincide with the stationary points of the optimization problem.  Then in Theorem~\ref{thm:conv_min}, we prove that our algorithm indeed converges to a minimum point of it. Lastly in Theorem~\ref{thm:stability}, we examine the dynamics of individual layers and establish that a stationary point with a layer's Bernoulli parameter and weights equal to 0 is an asymptotically stable equilibrium point for that layer independently of the other layers; this result implies that layers that will be eventually pruned, after a point they do so independently of the remaining layers and thus, they can safely removed  during training under a simple condition.

The proposed algorithm is not only computational efficient because of the prompt removal of unnecessary layers but also because it requires very little extra computation during training, consisting of estimating the additional gradients needed to learn the variational parameters. We obtained simple expressions for unbiased estimators of these gradients in terms of the training cost with the corresponding nonlinear layer path active or inactive and also an efficient first order Taylor approximation requiring a single backpropagation pass.
We evaluated the proposed learning/pruning algorithm on the MNIST, CIFAR-10/100 and ImageNet datasets using common LeNet, VGG16 and ResNet architectures and demonstrated that our layer pruning method is competitive
with state-of-the-art methods, both with respect to  achieved pruning ratios and test accuracy.
However, our method is distinguished from other high performing methods since due to its concurrent training and pruning character, it achieves these results with considerably less computation during training.

\appendix

\renewcommand\thesection{\appendixname\ \Alph{section}}
\section{Proof of Theorem~\ref{thm:Alg_conv}}\label{app:assumption_verification}
Theorem~\ref{thm:Alg_conv} follows as a direct application of Theorem~2.1 in \cite[p.127]{kushner2003stochastic} once the assumptions required by the latter are shown to hold.
We have $x(n)\equiv\{W_1^l(n),W_2^l(n),W_3^l(n),\theta^l(n), \, \forall l=1,\dots, L-1\}_{n\geq0}$
and $\nabla_x L(x) = \begin{bmatrix}
\nabla_W L^T & \nabla_\Theta L^T \end{bmatrix}^T$ as well as $\widehat{\nabla_x L(x)} = \begin{bmatrix}
\widehat{\nabla_W L}^T & \widehat{\nabla_\Theta L}^T \end{bmatrix}^T$, where all weights are collectively denoted by $W$ and all Bernoulli parameters by $\Theta$, respectively.
We also define
\begin{align*}
	M(n+1)\equalhat \nabla_x L(x(n))-\widehat{\nabla_x L}(x(n)).
\end{align*}
The following set of assumptions from \cite[p.126]{kushner2003stochastic} are written to fit our notation and are to be verified:
\begin{itemize}
	\item[A2.1] $\sup_n \E\left[\norm{M(n+1)}^2\right] < \infty$,
	\item[A2.2] $\E\left[M(n+1)\right] = \nabla_xL(x) + \beta(n)$, for some RV $\beta(n)$,
	\item[A2.3] $\nabla_xL(\cdot)$ is continuous,
	\item[A2.4] $\sum_{n=0}^\infty a(n) = \infty$ and $\sum_{n=0}^\infty a(n)^2 <\infty$,
	\item[A2.5] $\sum_{n=0}^\infty a(n)\beta(n) < 1$ with probability 1,
	\item[A2.6] $\nabla_xL(\cdot) = f_x(\cdot)$ for a continuously differentiable real-valued $f(\cdot)$ and $f(\cdot)$ is constant on each stationary set.
\end{itemize}
We note that in our application $\beta(n)\equiv0$ as shown during the verification of A2.2 later and
assumptions A2.4 and A2.5 are satisfied for the stepsize $a(n)$ from \eqref{eq:RM}.
Assume  that all activation functions $a^l(\cdot)$ of the network \eqref{eq:struct_res} have derivatives bounded by 1, i.e., $|{a^l}'(\cdot)|\leq 1$ and satisfy $a^l(0)=0$. Typical activation functions such as ReLU and the hyperbolic tangent conform with this assumption.
However, to satisfy Assumption A2.3, we require continuously differentiable activation functions throughout the network. In particular, functions such as the ReLU need to be replaced by similar, smoothed out versions. A2.6 is then also satisfied since $\nabla_x L(x)$ is the gradient of the function $L(x)$, which itself is continuously differentiable, real-valued and constant on each stationary set.

In the following, we verify assumptions A2.1 and A2.2.
Let
\begin{align*}
	 \hat C(W, \hat x_i, \hat y_i,\hat\Xi)\equalhat \frac{N}{B}\sum_{i=1}^B\left[-\log p(\hat y_i\mid \hat x_i,W,\hat\Xi)\right]
\end{align*}
and for simplicity and without loss of generality consider a batch size $B=1$.
Define
\begin{align*}
M_W&\equalhat  \nabla_W L - \widehat{\nabla_W L } = \nabla_W L -\nabla_{W} \hat C(W,\hat x_i, \hat y_i,\hat\Xi) - \lambda W \\
M_{\theta^l}& \equalhat \nabla_\theta L - \widehat{\nabla_\theta L}= \nabla_\theta L - \hat {C_1^l}(W,\hat x_i, \hat y_i,\xi^l=1,\hat{\bar \Xi}^l) + \hat {C_0^l}(W,\hat x_i, \hat y_i,\xi^l=0,\hat{\bar \Xi}^l) + \log\gamma,
\end{align*}
with $\hat C_j^l, \, j=0,1$ as in \eqref{eq:Chats_sampling} and leading to $M = \begin{bmatrix}
M_W^T &  M_\Theta^T
\end{bmatrix}^T$ with $M_\Theta \equalhat [M_{\theta^l}]_l$. In the following, we drop dependence on layer $l$ from $C_j^l, \, j=0,1$ and $\theta^l$ for reason of clarity.
Observe that by \eqref{eq:Cost} we have
\begin{align*}
\E_{\hat\Xi,\hat{\mathcal{D}}}\left[\nabla_{W} \hat C(W,\hat x_i, \hat y_i,\hat\Xi)\right] & =\nabla_{W}\E_{\hat\Xi,\hat{\mathcal{D}}}\left[ \hat C(W,\hat x_i, \hat y_i,\hat\Xi)\right]
= \nabla_{W} C(W,\Theta)\\
\E_{\hat\Xi,\hat{\mathcal{D}}}\left[\hat C_j(W,\hat x_i, \hat y_i,\hat\Xi)\right] &= C_j, \,\, j=1,2.
\end{align*}
Next, because all estimates in $\widehat{\nabla_x L}$ are unbiased single sample estimates, it is $\E_{\hat \Xi,\hat{\cal D}}\left[M\right]=0$, i.e.:
\begin{align*}
\E_{\hat \Xi,\hat{\cal D}}\left[M_W\right] &= \E_{\hat \Xi,\hat{\cal D}}\left[\nabla_{W} C(W,\Theta) + \lambda W - \nabla_{W} \hat C(W,\hat x_i, \hat y_i,\hat\Xi) - \lambda W \right] = 0\\
\E_{\hat \Xi,\hat{\cal D}}\left[M_\theta\right] &= \E_{\hat \Xi,\hat{\cal D}}\left[C_1-C_0 -\log\gamma - \hat C_1 (W,\hat x_i, \hat y_i,\xi=1,\hat{\bar \Xi}) + \hat {C_0}(W,\hat x_i, \hat y_i,\xi=0,\hat{\bar \Xi}) +\log\gamma \right] = 0,
\end{align*}
and hence also $\E_{\hat \Xi,\hat{\cal D}}\left[M_\Theta\right] = 0$, verifying Assumption A2.2 with $\beta(n)\equiv0$ and trivially also A2.5.
We write
\begin{align*}
\norm{M_W}^2 &= \norm{\nabla_{W} C(W,\Theta)-\nabla_{W} \hat C(W,\hat x_i, \hat y_i,\hat\Xi)}^2 \\
&= \norm{\nabla_{W} C(W,\Theta)}^2 -2 (\nabla_{W} \hat C(W,\hat x_i, \hat y_i,\hat\Xi))^T\nabla_{W} C(W,\Theta) +\norm{\nabla_{W} \hat C(W,\hat x_i, \hat y_i,\hat\Xi)}^2,
\end{align*}
and taking its expectation yields
\begin{align*}
\E_{\hat \Xi,\hat{\cal D}}\left[\norm{M_W}^2\right] &= \norm{\nabla_{W} C(W,\Theta)}^2 -2 \E_{\hat \Xi,\hat{\cal D}}\left[\nabla_{W} \hat C(W,\hat x_i, \hat y_i,\hat\Xi) \right]^T\nabla_{W} C(W,\Theta) + \E_{\hat \Xi,\hat{\cal D}}\left[\norm{\nabla_{W} \hat C(W,\hat x_i, \hat y_i,\hat\Xi)}^2\right]\\
&=\E_{\hat \Xi,\hat{\cal D}}\left[\norm{\nabla_{W} \hat C(W,\hat x_i, \hat y_i,\hat\Xi)}^2\right] - \norm{\nabla_{W} C(W,\Theta)}^2\\
&\leq\E_{\hat \Xi, \hat{\mathcal{D}}}\left[\norm{\nabla_{W} \hat C(W,\hat x_i, \hat y_i,\hat\Xi)}^2\right]
+ \norm{\nabla_{W} C(W,\Theta)}^2.
\end{align*}
Using the results of \cite{GUENTER2024_robust}, Appendix B, derived under the same assumptions \eqref{eq:weight_bound} and \eqref{eq:data_moments} as used here, the following bounds are established:
\begin{align*}
	\nabla_{W} C(W,\Theta) \leq 3\eta L\phi_{max},
\end{align*}
where $L$ is the number of hidden layers of the network and the factor of ``$3$'' originates from considering three sets of weights $W_1^l, W_2^l, W_3^l$ in each layer, as well as
\begin{align*}
	E_{\hat \Xi, \hat{\mathcal{D}}}\left[\norm{\nabla_{W} \hat C(W,\hat x_i, \hat y_i,\hat\Xi)}^2\right]  \leq 36 K_D L^2\phi_{max}^2,
\end{align*}
where $K_D<\infty$ is a constant (compare to \cite{GUENTER2024_robust}, Appendix B, eq. B.5-B.6).
Then, we obtain
\begin{align*}
	\E_{\hat \Xi, \hat{\mathcal{D}}}\left[\norm{M_W}^2\right] \leq (36 K_D+9\eta^2)L^2\phi_{max}^2 < \infty
\end{align*}
as required.
Lastly, consider a scalar $M_\theta$ and write
\begin{align*}
|M_\theta| &= |C_1-C_0 - \hat {C_1}(W,\hat x_i, \hat y_i,\xi=1,\hat{\bar \Xi}) + \hat {C_0}(W,\hat x_i, \hat y_i,\xi=0,\hat{\bar \Xi})|\\
&\leq |C_1-C_0| + | \hat {C_1}(W,\hat x_i, \hat y_i,\xi=1,\hat{\bar \Xi}) - \hat {C_0}(W,\hat x_i, \hat y_i,\xi=0,\hat{\bar \Xi})|\\
&\leq \kappa \norm{w} + \kappa \norm{w} \leq 2\kappa \phi_{max}
\end{align*}
using $\norm{w}\leq \phi_{max}$ (see assumption \eqref{eq:weight_bound}), the bound from \eqref{eq:eta_kappa_bnd} and noting that from \cite{GUENTER2024_robust}, Appendix~A, this bound also holds for samples $\hat C_j(\cdot)$ of the function $C_j(\cdot)$, $j=0,1$.
Then, it follows $\norm{M_\theta}^2\leq 4\kappa^2\phi_{max}^2$ and therefore $\E_{\hat \Xi,\hat{\cal D}}\left[\norm{M_\Theta}^2\right]\leq 4L\kappa^2\phi_{max}^2$.
This concludes the verification of Assumption A2.1.



\begin{thebibliography}{10}

\bibitem{girshick_fast_2015}
R.~Girshick, ``Fast r-cnn,'' in {\em International Conference on Computer
  Vision}, pp.~1440--1448, 2015.

\bibitem{noh_learning_2015}
H.~Noh, S.~Hong, and B.~Han, ``Learning deconvolution network for semantic
  segmentation,'' in {\em IEEE International Conference on Computer Vision},
  pp.~1520--1528, 2015.

\bibitem{silver_mastering_2017}
D.~Silver, J.~Schrittwieser, K.~Simonyan, I.~Antonoglou, A.~Huang, A.~Guez,
  T.~Hubert, L.~Baker, M.~Lai, A.~Bolton, Y.~Chen, T.~Lillicrap, F.~Hui,
  L.~Sifre, G.~van~den Driessche, T.~Graepel, and D.~Hassabis, ``Mastering the
  game of {Go} without human knowledge,'' {\em Nature}, vol.~550, no.~7676,
  pp.~354--359, 2017.

\bibitem{ResNet_he_2016}
K.~He, X.~Zhang, S.~Ren, and J.~Sun, ``Deep residual learning for image
  recognition,'' in {\em 2016 IEEE Conference on Computer Vision and Pattern
  Recognition (CVPR)}, pp.~770--778, 2016.

\bibitem{he_identitymappings2016}
K.~He, X.~Zhang, S.~Ren, and J.~Sun, ``Identity mappings in deep residual
  networks,'' in {\em European Conference on Computer Vision -- ECCV 2016},
  pp.~630--645, Springer International Publishing, 2016.

\bibitem{batchnorm_ioffe2015}
S.~Ioffe and C.~Szegedy, ``Batch normalization: Accelerating deep network
  training by reducing internal covariate shift,'' in {\em Proceedings of the
  32nd International Conference on International Conference on Machine Learning
  - Volume 37}, ICML'15, p.~448–456, JMLR.org, 2015.

\bibitem{han_learning_2015}
S.~Han, J.~Pool, J.~Tran, and W.~J. Dally, ``Learning both weights and
  connections for efficient neural networks,'' in {\em International Conference
  on Neural Information Processing Systems - Volume 1}, p.~1135–1143, 2015.

\bibitem{blalock_what_2020}
D.~Blalock, J.~J. Gonzalez~Ortiz, J.~Frankle, and J.~Guttag, ``What is the
  state of neural network pruning?,'' in {\em Machine Learning and Systems},
  vol.~2, pp.~129--146, 2020.

\bibitem{li_pruning_2017}
H.~Li, A.~Kadav, I.~Durdanovic, H.~Samet, and H.~P. Graf, ``Pruning filters for
  efficient convnets,'' {\em arXiv preprint arXiv:1608.08710}, 2016.

\bibitem{he_soft_2018}
Y.~He, G.~Kang, X.~Dong, Y.~Fu, and Y.~Yang, ``Soft filter pruning for
  accelerating deep convolutional neural networks,'' in {\em International
  Joint Conference on Artificial Intelligence}, p.~2234–2240, 2018.

\bibitem{liebenwein_lost_2021}
L.~Liebenwein, C.~Baykal, B.~Carter, D.~Gifford, and D.~Rus, ``Lost in
  {Pruning}: {The} {Effects} of {Pruning} {Neural} {Networks} beyond {Test}
  {Accuracy},'' {\em arXiv:2103.03014 [cs]}, Mar. 2021.
\newblock arXiv: 2103.03014.

\bibitem{SCOP_tang_2020}
Y.~Tang, Y.~Wang, Y.~Xu, D.~Tao, C.~Xu, C.~Xu, and C.~Xu, ``{SCOP}: scientific
  control for reliable neural network pruning,'' in {\em Proceedings of the
  34th {International} {Conference} on {Neural} {Information} {Processing}
  {Systems}}, {NIPS}'20, pp.~10936--10947, Curran Associates Inc., Dec. 2020.

\bibitem{frankle_lottery_2019}
J.~Frankle and M.~Carbin, ``The lottery ticket hypothesis: Finding sparse,
  trainable neural networks,'' in {\em International Conference on Learning
  Representations}, 2019.

\bibitem{Louizos_2017}
C.~Louizos, K.~Ullrich, and M.~Welling, ``Bayesian compression for deep
  learning,'' in {\em Advances in Neural Information Processing Systems}
  (I.~Guyon, U.~V. Luxburg, S.~Bengio, H.~Wallach, R.~Fergus, S.~Vishwanathan,
  and R.~Garnett, eds.), vol.~30, Curran Associates, Inc., 2017.

\bibitem{resnets_ensembles_veit16}
A.~Veit, M.~Wilber, and S.~Belongie, ``Residual networks behave like ensembles
  of relatively shallow networks,'' in {\em Proceedings of the 30th
  International Conference on Neural Information Processing Systems}, NIPS'16,
  (Red Hook, NY, USA), p.~550–558, Curran Associates Inc., 2016.

\bibitem{stoch_depth_huang16}
G.~Huang, Y.~Sun, Z.~Liu, D.~Sedra, and K.~Q. Weinberger, ``Deep networks with
  stochastic depth,'' in {\em Computer Vision -- ECCV 2016} (B.~Leibe,
  J.~Matas, N.~Sebe, and M.~Welling, eds.), (Cham), pp.~646--661, Springer
  International Publishing, 2016.

\bibitem{Srivastava_2014_Dropout}
N.~Srivastava, G.~Hinton, A.~Krizhevsky, I.~Sutskever, and R.~Salakhutdinov,
  ``Dropout: A simple way to prevent neural networks from overfitting,'' {\em
  Journal of Machine Learning Research}, vol.~15, no.~56, pp.~1929--1958, 2014.

\bibitem{chen19_shallowing}
S.~Chen and Q.~Zhao, ``Shallowing deep networks: Layer-wise pruning based on
  feature representations,'' {\em IEEE Transactions on Pattern Analysis and
  Machine Intelligence}, vol.~41, no.~12, pp.~3048--3056, 2019.

\bibitem{DBPwang2019}
W.~Wang, S.~Zhao, M.~Chen, J.~Hu, D.~Cai, and H.~Liu, ``Dbp: Discrimination
  based block-level pruning for deep model acceleration,'' 2019.

\bibitem{LPSR_zhang}
K.~Zhang and G.~Liu, ``Layer pruning for obtaining shallower resnets,'' {\em
  IEEE Signal Processing Letters}, vol.~29, pp.~1172--1176, 2022.

\bibitem{molchanov_pruning_conv}
P.~Molchanov, S.~Tyree, T.~Karras, T.~Aila, and J.~Kautz, ``Pruning
  convolutional neural networks for resource efficient inference,'' in {\em
  International Conference on Learning Representations}, 2017.

\bibitem{sparse_huang2018}
Z.~Huang and N.~Wang, ``Data-driven sparse structure selection for deep neural
  networks,'' in {\em Computer Vision - {ECCV} 2018 - 15th European Conference,
  Munich, Germany, September 8-14, 2018, Proceedings, Part {XVI}} (V.~Ferrari,
  M.~Hebert, C.~Sminchisescu, and Y.~Weiss, eds.), vol.~11220 of {\em Lecture
  Notes in Computer Science}, pp.~317--334, Springer, 2018.

\bibitem{proximal_algs_Parikh2014}
N.~Parikh and S.~Boyd, ``Proximal algorithms,'' {\em Foundations and Trends in
  Optimization}, vol.~1, p.~127–239, Jan. 2014.

\bibitem{xu2020layer}
P.~Xu, J.~Cao, F.~Shang, W.~Sun, and P.~Li, ``Layer pruning via fusible
  residual convolutional block for deep neural networks,'' 2020.

\bibitem{molchanov2017variational}
D.~Molchanov, A.~Ashukha, and D.~Vetrov, ``Variational dropout sparsifies deep
  neural networks,'' in {\em International Conference on Machine Learning},
  pp.~2498--2507, 2017.

\bibitem{Nalisnick_2015}
E.~Nalisnick, A.~Anandkumar, and P.~Smyth, ``A scale mixture perspective of
  multiplicative noise in neural networks,'' {\em arXiv preprint
  arXiv:1506.03208}, 2015.

\bibitem{gal2015dropout}
Y.~Gal and Z.~Ghahramani, ``Dropout as a bayesian approximation: Representing
  model uncertainty in deep learning,'' 2015.

\bibitem{pmlr-v97-nalisnick19a}
E.~Nalisnick, J.~M. Hernandez-Lobato, and P.~Smyth, ``Dropout as a structured
  shrinkage prior,'' in {\em Proceedings of the 36th International Conference
  on Machine Learning} (K.~Chaudhuri and R.~Salakhutdinov, eds.), vol.~97 of
  {\em Proceedings of Machine Learning Research}, pp.~4712--4722, PMLR, 09--15
  Jun 2019.

\bibitem{GUENTER2024_robust}
V.~F.~I. Guenter and A.~Sideris, ``Robust learning of parsimonious deep neural
  networks,'' {\em Neurocomputing}, vol.~566, p.~127011, 2024.

\bibitem{graham2015efficient}
B.~Graham, J.~Reizenstein, and L.~Robinson, ``Efficient batchwise dropout
  training using submatrices,'' {\em arXiv preprint arXiv:1502.02478}, 2015.

\bibitem{cifar10}
A.~Krizhevsky, V.~Nair, and G.~Hinton, ``Cifar-10 (canadian institute for
  advanced research),''

\bibitem{simonyan_deep_2015}
K.~Simonyan and A.~Zisserman, ``Very deep convolutional networks for
  large-scale image recognition,'' in {\em International Conference on Learning
  Representations (oral)}, 2015.

\bibitem{HORNIK1989}
K.~Hornik, M.~Stinchcombe, and H.~White, ``Multilayer feedforward networks are
  universal approximators,'' {\em Neural Networks}, vol.~2, no.~5,
  pp.~359--366, 1989.

\bibitem{bishop_pattern_2006}
C.~M. Bishop, {\em Pattern recognition and machine learning}.
\newblock Information science and statistics, New York: Springer, 2006.

\bibitem{corduneanu_2001_model_sel}
A.~Corduneanu and C.~Bishop, ``Variational bayesian model selection for mixture
  distribution,'' {\em Artificial Intelligence and Statistics}, vol.~18,
  pp.~27--34, 2001.

\bibitem{Lueneberger_Lin_Nonlin_Prog}
D.~G. Luenberger and Y.~Ye, {\em Linear and Nonlinear Programming}.
\newblock Springer New York, 2008.

\bibitem{bengio_estimating_2013}
Y.~Bengio, N.~L{\'e}onard, and A.~Courville, ``Estimating or propagating
  gradients through stochastic neurons for conditional computation,'' {\em
  arXiv preprint arXiv:1308.3432}, 2013.

\bibitem{kushner2003stochastic}
H.~Kushner and G.~Yin, {\em Stochastic Approximation and Recursive Algorithms
  and Applications}.
\newblock Stochastic Modelling and Applied Probability, Springer New York,
  2003.

\bibitem{Jin_escape2017}
C.~Jin, R.~Ge, P.~Netrapalli, S.~M. Kakade, and M.~I. Jordan, ``How to escape
  saddle points efficiently,'' in {\em Proceedings of the 34th International
  Conference on Machine Learning} (D.~Precup and Y.~W. Teh, eds.), vol.~70 of
  {\em Proceedings of Machine Learning Research}, pp.~1724--1732, PMLR, 06--11
  Aug 2017.

\bibitem{Daneshmand1_escape2018}
H.~Daneshmand, J.~Kohler, A.~Lucchi, and T.~Hofmann, ``Escaping saddles with
  stochastic gradients,'' in {\em Proceedings of the 35th International
  Conference on Machine Learning} (J.~Dy and A.~Krause, eds.), vol.~80 of {\em
  Proceedings of Machine Learning Research}, pp.~1155--1164, PMLR, 10--15 Jul
  2018.

\bibitem{lecun-mnisthandwrittendigit-2010}
Y.~LeCun and C.~Cortes, ``{MNIST} handwritten digit database.''
  \url{http://yann.lecun.com/exdb/mnist/}, 2010.

\bibitem{Krizhevsky_09}
A.~Krizhevsky, ``Learning multiple layers of features from tiny images,'' tech.
  rep., 2009.

\bibitem{imagenet_deng2009}
J.~Deng, W.~Dong, R.~Socher, L.-J. Li, K.~Li, and L.~Fei-Fei, ``Imagenet: A
  large-scale hierarchical image database,'' in {\em 2009 IEEE conference on
  computer vision and pattern recognition}, pp.~248--255, Ieee, 2009.

\bibitem{kingma_adam:_2014}
D.~P. Kingma and J.~Ba, ``Adam: A method for stochastic optimization,'' {\em
  International Conference on Learning Representations}, 2015.

\bibitem{Loshchilov2017DecoupledWD}
I.~Loshchilov and F.~Hutter, ``Decoupled weight decay regularization,'' in {\em
  International Conference on Learning Representations}, 2017.

\end{thebibliography}
\end{document}